\newtheorem{thm}{Theorem}
\newtheorem{pro}{Proposition}
\newtheorem{cor}{Corollary}
\newdefinition{rmk}{Remark}
\newdefinition{df}{Definition}
\newdefinition{ex}{Example}
\newproof{pf}{Proof}
\begin{document}

\begin{frontmatter}




\title{Comparison research on binary relations based on transitive degrees and cluster degrees}


\author{Zhaohao Wang\corref{cor1}}\ead{nysywzh@163.com} \author{Huifang Yue}
\address{School of Mathematics and Computer Science, Shanxi Normal University,
           Shanxi, Linfen, China} \cortext[cor1]{Corresponding author}

\begin{abstract}
Interval-valued information systems are generalized models of single-valued information systems. By rough set approach, interval-valued information systems have been extensively studied. Authors could establish many binary relations from the same interval-valued information system. In this paper, we do some researches on comparing these binary relations so as to provide numerical scales for choosing suitable relations in dealing with interval-valued information systems. Firstly, based on similarity degrees, we compare the most common three binary relations induced from the same interval-valued information system. Secondly, we propose the concepts of transitive degree and cluster degree, and investigate their properties. Finally, we provide some methods to compare binary relations by means of the transitive degree and the cluster degree. Furthermore, we use these methods to analyze the most common three relations induced from Face Recognition Dataset, and obtain that $RF_{B} ^{\lambda}$ is a good choice when we deal with an interval-valued information system by means of rough set approach.
\end{abstract}

\begin{keyword}
Rough set, Transitive degree, Cluster degree, Interval-valued information system, Reduction
\end{keyword}

\end{frontmatter}



\section{Introduction}
Rough set theory \cite{16} is an effective mathematical tool for dealing with inaccurate, fuzzy and uncertain data, and it has been successfully applied in many fields, such as machine learning, pattern recognition, financial analysis, decision analysis and etc.\cite{17,18, 19, 27, 28, 29, 30, 31}.

Initially, authors use rough set theory to explore information systems with discrete values. However, there are many forms of data in practical applications, such as set-valued data \cite{37, 38, 40, 41}, interval-valued data \cite{42, 43, 44, 45}, incomplete data \cite{25, 32, 33, 34, 39}, gray data \cite{35, 36} and etc.. Interval values are an extremely common form in information systems. In recent years, by means of generalized rough set model, the interval-valued information system has been widely studied by scholars \cite{4,5,11,13,15,21,22,26}. Dai et al. \cite{5,11} obtained the similarity relation from interval-valued information systems, and then they presented $\theta$-similarity entropy and conditional entropy to deal with the uncertainty measurement problem in interval-valued information systems.
Zhang et al. \cite{26} introduced the $\alpha$-dominance relation from the interval-valued information system so as to investigate this system. Du et al. \cite{23} investigated another kind of dominance relation based on partial order and its corresponding attribute reduction problem. Based on the $\alpha$-dominance relation, Yang et al. \cite{15} proposed a parameterized dominance-based rough set model in the interval-valued information system. Based on $\alpha$-weak similarity relation, Dai et al. \cite{4} studied uncertainty measurement in incomplete interval-valued information systems.

From these references, we can see that an available binary relation plays a key role in investigating interval-valued information systems. However, these studies obtained many binary relations from the same interval-valued information system.
Which is the binary relation better than the others in handling an interval-valued information system? It is interesting to solve this problem. This paper is devoted to study this problem. Since the binary relations established by authors do not usually satisfy the transitivity, we propose the concept of transitive degree to compare the transitive ability of the binary relations by means of transitive closure. In addition, considering the distribution of the values in information systems, we give the concept of cluster degree to analyze the binary relations. Then we establish some methods to distinguish the binary relations by means of the transitive degree and the cluster degree. Without loss of generality, by using these methods, we compare the most common three binary relations induced from the same interval-valued information system. This work provides several numerical scales for choosing suitable relations in dealing with interval-valued information systems.

The rest of this paper is organized as follows. Some of the related concepts involved in this paper are reviewed in the preliminary of Section 2. Section 3 discusses the connection between the binary relations $RF_{B} ^{\lambda}, RS_{B} ^{\lambda}$ and $RT_{B} ^{\lambda}$ from the viewpoint of similarity degree. Section 4 defines the concepts of transitive degrees and cluster degrees, and investigates their properties. Section 5 gives some ways to analyze binary relations from the viewpoint of the transitive degree. The reduction based on the transitive degree is established, and is used to compare binary relations. Section 6 provides some methods to analyze binary relations from the viewpoint of the cluster degree. The reduction based on the cluster degree is established, and is used to compare binary relations. Section 7 concludes this paper.

\section{Preliminaries}
In this section, we review some basic notions related to rough set theory. In addition, some concepts related to interval-valued information systems are introduced.

\subsection{ Basic concepts in rough set theory}
A finite nonempty set $U$ is called a universe. $R\subseteq U \times U$ is referred as to a binary relation on $U$. If for any $x \in U$, $(x, x)\in R$, then $R$ is called reflexive. If for any $x, y\in U$, $(x, y)\in R$ implies that $(y, x)\in R$, then $R$ is called symmetric. If for any $x, y, z \in U$, $(x, y)\in R$ and $(y, z)\in R$ imply that $(x, z)\in R$, then $R$ is called transitive.

If $R$ is reflexive, symmetric and transitive, then $R$ is referred to as an equivalence relation. If $R$ is reflexive and symmetric then $R$ is referred to as a similarity relation.

Assume that $R$ is a binary relation on $U$. For any $x \in U$, the successor neighborhood $R(x)$ of $x$ is defined by
\begin{equation}\label{eq1}
R(x) = \{y \in U\mid (x, y) \in R\}.
\end{equation}
When $R$ is an equivalence relation, $R(x)$ is usually called an equivalence class, and denoted by $[x]_{R}$.

If $R_{1}$ and $R_{2}$ are binary relations on $U$, then the following property holds:
\begin{equation}\label{rpr}
R_{1} \subseteq R_{2} \Leftrightarrow ~\forall x \in U,~R_{1}(x) \subseteq R_{2}(x).
\end{equation}
If $R_{1} \subseteq R_{2}$, then we call $R_{1}$ is finer than $R_{2}$, or $R_{2}$ is coarser than $R_{1}$. We also say that the fine degree of $R_{1}$ is greater than that of $R_{2}$, or the coarse degree of $R_{2}$ is greater than that of $R_{1}$.

Pawlak \cite{16} presented the concepts of lower and upper approximations. Then the definitions of lower and upper approximations were generalized the following version.

\begin{df} \label{app} Let $R$ be a binary relation on $U$. For any $X \subseteq U$, the lower and the upper approximations of $X$ with respect to $R$ are defined as follows:

$\underline{apr}_{R}(X) = \{x\in U \mid R(x)\subseteq X\}$ and $\overline{apr}_{R}(X) = \{x\in U \mid R(x)\cap X \neq \emptyset\}.$
\end{df}

Pawlak \cite{16} proposed two numerical measures for evaluating the uncertainty of rough sets: accuracy and roughness. Dai \cite{11} used these measures to evaluate the uncertainty, accuracy and roughness in the interval-valued information systems. Assume that $R$ is a reflexive relation on $U$ and $X \subseteq U$. The accuracy and roughness of $X$ are given as follows:
\begin{equation}\label{le6}
\alpha_{R}(X)=\frac{|\underline{apr}_{R}(X)|}{|\overline{apr}_{R}(X)|},
\end{equation}
\begin{equation}\label{le7}
\rho_{R}(X)=1-\alpha_{R}(X)=1-\frac{|\underline{apr}_{R}(X)|}{|\overline{apr}_{R}(X)|},
\end{equation}
where $| \cdot |$ denotes the cardinality of a set.
\subsection{Interval-valued information systems}
In this section, we introduce some basic concepts about interval-valued information systems. Firstly, we review the notion of interval values \cite{2}.

In fact, an interval value is a closed interval which is denoted by $u = [u^{-}, u^{+}]$, where $u^{-}, u^{+}\in\mathbb{R}$ and $u^{-}\leq u^{+}$, where $\mathbb{R}$ is the set of real numbers. If $u^{-} = u^{+}$, the interval $u$ will degenerate into a single real number.

Let $u$ and $v$ be interval values, that is, $u = [u^{-}, u^{+}]$  and  $v = [v^{-}, v^{+}]$. The the intersection and union of $u$ and $v$ are defined as follows:
\begin{equation}\label{capi}
u \cap v =
 \left\{\begin{array}{l}
\displaystyle {[\max \{u^{-}, v^{-}\}, \min \{u^{+}, v^{+}\}],~~\max \{u^{-}, v^{-}\} \leq \min \{u^{+}, v^{+}\}},\\
\displaystyle {\emptyset,~~~~~~~~~~~~~~~~~~~~~~~~~~~~~~~~~~~~~~~~~~Otherwise,}\\
\end {array}\right.
\end{equation}
\begin{equation}\label{cupi}
u \cup v = [\min \{u^{-}, v^{-}\}, \max \{u^{+}, v^{+}\}].
\end{equation}

{\begin{df}\label{inin}\cite{3} An interval-valued information system is a quadruple $(U,A,V,f )$, where $U$ is a finite nonempty set called the universe, $A$ is a finite nonempty set of condition attributes, $V = \cup_{a \in A} V_{_{a}}$, where $V_{a}$ is called the domain of attribute $a$, and $f : U \times A \longrightarrow V$ is called a total function, such that $f(x,a) = [f(x,a)^{-}, f(x,a)^{+}] \in V_{a}$ is an interval value for every $a \in A$ and $x \in U$.
\end{df}

\subsection{Similarity relations for interval-valued information systems}\label{sec23}
Nakahara et al. \cite{5} proposed the concept of similarity degree which is used to compare interval values. By means of this concept, authors \cite{3,4,5,6,7,8} investigated the interval-valued information systems. They used similarity degrees to induce binary relations, and then constructed rough set models in interval-valued information systems. By rough set approach, they analyzed interval-valued information systems. In this section, we review three similarity degrees, and we introduce three most common similarity relations.

\begin{rmk}\label{rm}
Let $S(u,v)$ be a function from $V\times V$ to $\mathbb{R}$. The following properties are commonly used to characterize a similarity degree \cite{li,he}:

(a) $0\leq S(u,v)\leq 1$.

(b) $S(u,v)=1$ if and only if $u=v$.

(c) $S(u,v)=S(v,u)$.

\noindent That is to say, if $S(u,v)$ satisfies the properties (a)-(c), then $S(u,v)$ is a similarity degree.
\end{rmk}

Let $V$ be the universe of interval values. For any $u, v\in V$, the most common three similarity degrees of $u$ and $v$ are given as follows:
\begin{equation}\label{rf}
SF(u, v) = \frac{|u\cap v|}{|u\cup v|},
\end{equation}
where $|\cdot |$ denotes the length of closed interval and the length of an empty interval or a single point is zero.
\begin{equation}\label{rs}
SS(u, v) = 1 - \frac{1}{2} \ast \frac{|u^{+} -  v^{+}| + |u^{-} -  v^{-}|}{\max\{u^{+}, v^{+}\} - \min\{u^{-}, v^{-}\}},
\end{equation}
\begin{equation}\label{rt}
ST(u, v) = 1 - |P(u \geq v) - P(v\geq u)|,
\end{equation}
where $P(u\geq v) = \min \Big\{1, \max \{\frac{u^{+} -  v^{-}}{|u| + |v|}, 0\}\Big\}.$

Let $(U, A, V, f)$ be an interval-valued information system. By means of the three similarity degrees, for $B \subseteq A$ and $\lambda \in [0, 1]$, three $\lambda$-similarity relations with respect to $B$ can be constructed as follows:
\begin{equation}\label{fr}
RF_{B} ^{\lambda}=\{(x, y)\in U\times U\mid SF(f(x, a), f(y, a))\geq\lambda, \forall a \in B\},
\end{equation}
\begin{equation}\label{sr}
RS_{B}^{\lambda}=\{(x, y) \in U\times U\mid SS(f(x, a), f(y, a))\geq\lambda, \forall a \in B \},
\end{equation}
\begin{equation}\label{tr}
RT_{B}^{\lambda} = \{(x, y) \in U\times U\mid ST(f(x, a), f(y, a))\geq\lambda, \forall a \in B\}.
\end{equation}

\subsection{Transitive closure}
In this section, we introduce the concept of transitive closure of binary relations and the method of calculating transitive closure of a binary relation.
\begin{df}\label{tc} \cite{12} Let $R$ and $t(R)$ be general binary relations on $U$. $t(R)$ is called the transitive closure of $R$, if the following conditions hold:

(T1) $t(R)$ is transitive.

(T2) $R \subseteq t(R)$.

(T3) For any transitive relation $R^{\prime}$ on $U$, if $R \subseteq R^{\prime}$, then $t(R)\subseteq R^{\prime}$.
\end{df}

From the above definition, we can see that the transitive closure $t(R)$ of $R$ is the minimum transitive relation containing $R$.

Let $U=\{x_{1},x_{2},\cdots,x_{n}\}$ and $R$ be a binary relation on $U$. In this paper, we denote the relation matrix of $R$ as $M_{R}=(m_{ij})_{n\times n}$, where
\begin{equation}
m_{ij}=\left\{
  \begin{array}{ll}
    1, & \hbox{$(x_{i},x_{j})\in R$,} \\
    0, & \hbox{$(x_{i},x_{j})\notin R$.}
  \end{array}
\right.
\end{equation}
For example, let $U = \{x_{1}, x_{2}, x_{3}, x_{4}\}$ and $R = \{ (x_{1}, x_{1}), (x_{1}, x_{2}), (x_{2}, x_{2}), (x_{2}, x_{3}), (x_{3}, x_{3} ), (x_{4}, x_{4})\}$. Then the relation matrix $M_{R}$ of $R$ can be computed as follows:
$$M_{R} =\left(
  \begin{array}{ccccc}
    1 & 1 & 0 & 0 \\
    0 & 1 & 1 & 0 \\
    0 & 0 & 1 & 0 \\
    0 & 0 & 0 & 1 \\
  \end{array}
\right).$$
Next, we will give a way to calculate the transitive closure.
\begin{pro}\label{tcp} \cite{12} Let $R$ be a reflexive binary relation on $U$ and $t(R)$ be the transitive closure of $R$. Then there exists $l\in\{1,2,\cdots,|U|\}$ such that
\begin{center}
$M _{t(R)} = M_{R}^{l},$
\end{center}
where $M_{R}^{l} = \underbrace{M_{R} \circ M_{R} \circ, \cdots, \circ M_{R}}\limits_{l}$,  $M_{R}^{l} = M_{R}^{l + 1}$, and $\circ$ represents the Boolean operation.
\end{pro}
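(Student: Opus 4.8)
The plan is to exploit the standard dictionary between Boolean matrix multiplication and relational composition, and then to control where the sequence of Boolean powers of $M_{R}$ becomes constant. Write $R^{k}$ for the $k$-fold composition $R\circ R\circ\cdots\circ R$ ($k$ copies). The first step is to record that $M_{R}^{k}=M_{R^{k}}$ for every $k\geq 1$ (the product on the left being Boolean): this follows from the definition of the Boolean product of relation matrices together with a one-line induction on $k$, and it has the convenient reading that $(x_{i},x_{j})\in R^{k}$ precisely when there is an $R$-walk $x_{i}=z_{0},z_{1},\dots,z_{k}=x_{j}$ with $(z_{t},z_{t+1})\in R$ for all $t$. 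The second step uses the reflexivity of $R$: from an $R$-walk of length $k$ from $x$ to $y$ one obtains an $R$-walk of length $k+1$ by appending the loop $(y,y)\in R$, so $R\subseteq R^{2}\subseteq R^{3}\subseteq\cdots$, i.e.\ $M_{R}^{k}\leq M_{R}^{k+1}$ entrywise for all $k$.

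The crucial step is to show this increasing chain stabilizes no later than index $n:=|U|$, that is, $R^{n}=R^{n+1}$. One inclusion is the monotonicity just established. For the other, take $(x,y)\in R^{n+1}$ with witnessing walk $z_{0},z_{1},\dots,z_{n+1}$. This walk has $n+2$ vertices drawn from the $n$-element set $U$, so by the pigeonhole principle two of them coincide, say $z_{i}=z_{j}$ with $i<j$; deleting the closed segment between them leaves a strictly shorter $R$-walk from $x$ to $y$. Iterating, we arrive at an $R$-walk from $x$ to $y$ of some length $m\leq n$, and appending $n-m$ copies of the loop $(y,y)$ (legitimate since $R$ is reflexive) converts it into an $R$-walk of length exactly $n$; hence $(x,y)\in R^{n}$. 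Therefore $R^{n}=R^{n+1}$, so there is a least $l$ with $M_{R}^{l}=M_{R}^{l+1}$ and $1\leq l\leq n$; a routine induction (if $R^{m}=R^{l}$ then $R^{m+1}=R^{m}\circ R=R^{l}\circ R=R^{l+1}=R^{l}$) then gives $R^{m}=R^{l}$ for every $m\geq l$.

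It remains to identify $R^{l}$ with $t(R)$ by verifying the three clauses of Definition \ref{tc}. Clause (T2) is immediate, since $R=R^{1}\subseteq R^{l}$ by the increasing chain. Clause (T1) follows because $R^{l}\circ R^{l}=R^{2l}$ and $2l\geq l$ force $R^{2l}=R^{l}$, so $R^{l}$ is transitive. For clause (T3), if $R^{\prime}$ is any transitive relation with $R\subseteq R^{\prime}$, then $R^{k}\subseteq R^{\prime}$ for all $k\geq 1$ by induction ($R^{k+1}=R^{k}\circ R\subseteq R^{\prime}\circ R^{\prime}\subseteq R^{\prime}$, using transitivity of $R^{\prime}$), so in particular $R^{l}\subseteq R^{\prime}$. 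Since the transitive closure is the unique minimum transitive relation containing $R$, we conclude $t(R)=R^{l}$, whence $M_{t(R)}=M_{R^{l}}=M_{R}^{l}$ with $M_{R}^{l}=M_{R}^{l+1}$ and $l\in\{1,\dots,|U|\}$, as claimed. The only genuinely delicate point is the pigeonhole/walk-shortening argument that pins the stabilization index to $|U|$; everything else is bookkeeping once the matrix--relation dictionary and the monotonicity of the powers are in hand.
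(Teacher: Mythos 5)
Your argument is correct and complete. Note, however, that the paper itself offers no proof of this proposition to compare against: it is quoted verbatim from reference \cite{12} and used as a black box for computing $t(R)$, so your write-up is in effect supplying the missing justification rather than paralleling an existing one. On its own merits the proof is sound: the dictionary $M_{R}^{k}=M_{R^{k}}$ between Boolean matrix powers and iterated composition is the right starting point; reflexivity is used exactly where it is needed, both to make the chain $R\subseteq R^{2}\subseteq\cdots$ increasing and to pad a shortened walk back up to length exactly $n$ (without reflexivity the conclusion would have to be stated as a union of powers rather than a single power, so it is good that you isolated both uses); the pigeonhole/walk-shortening step correctly pins the stabilization index to at most $|U|$; and the verification of clauses (T1)--(T3) of Definition \ref{tc} for $R^{l}$, together with uniqueness of the minimum transitive relation containing $R$, legitimately identifies $R^{l}$ with $t(R)$. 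The one cosmetic point worth a half-sentence in a final version is the degenerate case where cycle-deletion collapses a closed walk to length $0$; your loop-appending step already covers it, but saying so explicitly would make the argument airtight.
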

Clearly, according to the relation matrix $M _{t(R)}$, we can obtain the binary relation $t(R)$. That is to say, we can compute the transitive closure $t(R)$ of $R$ according to Proposition \ref{tcp}.

\section{Comparison research on binary relations, $RF_{B} ^{\lambda}$, $RS_{B} ^{\lambda}$ and $RT_{B} ^{\lambda}$, based on similarity degrees}
The $\lambda$-similarity relations $RF_{B} ^{\lambda},$ $RS_{B} ^{\lambda}$ and $RT_{B} ^{\lambda}$ have been studied by some scholars, but there is little research on the relationship between them. In this section, we establish their connections.

These $\lambda$-similarity relations are constructed by the corresponding similarity degrees. Therefore we first give the relationship among the similarity degrees $SF(u, v)$, $SS(u, v)$ and $ST(u,v)$.
\begin{pro}\label{srpr}
Let $V$ be the universe of interval values. Then $\forall u, v\in V$, $SS(u, v) \geq ST(u, v) \geq SF(u, v)$.
\end{pro}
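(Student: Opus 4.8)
The plan is to establish the two inequalities separately, working directly from the definitions in~\eqref{rf}, \eqref{rs} and \eqref{rt}. Fix interval values $u = [u^{-}, u^{+}]$ and $v = [v^{-}, v^{+}]$. It is convenient to introduce the abbreviations $M = \max\{u^{+}, v^{+}\}$, $m = \min\{u^{-}, v^{-}\}$, $a = \max\{u^{-}, v^{-}\}$ and $b = \min\{u^{+}, v^{+}\}$, so that $|u \cup v| = M - m$, and $|u \cap v| = \max\{b - a, 0\}$. Note $M - m \geq b - a$ always, and $M - m > 0$ unless $u = v$ is a single point, in which case all three degrees equal $1$ and the claim is trivial; so assume $M - m > 0$ from now on.

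First I would prove $SS(u,v) \geq SF(u,v)$. The key algebraic identity to verify is
\begin{equation}\label{keyid}
|u^{+} - v^{+}| + |u^{-} - v^{-}| = (M - b) + (a - m),
\end{equation}
which follows by checking the (few) cases according to the relative order of the endpoints, or more slickly by observing $|u^{+}-v^{+}| = \max\{u^{+},v^{+}\} - \min\{u^{+},v^{+}\} = M - b$ and similarly $|u^{-}-v^{-}| = a - m$. Substituting~\eqref{keyid} into~\eqref{rs} gives $SS(u,v) = 1 - \tfrac{1}{2}\cdot\tfrac{(M-b)+(a-m)}{M-m} = \tfrac{1}{2}\cdot\tfrac{(b - a) + (M - m)}{M - m}$. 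When $u \cap v \neq \emptyset$ (i.e. $b \geq a$) this equals $\tfrac{1}{2}\bigl(\tfrac{|u\cap v|}{|u\cup v|} + 1\bigr) \geq \tfrac{|u\cap v|}{|u\cup v|} = SF(u,v)$, the last step because $SF \leq 1$. When $u \cap v = \emptyset$ (i.e. $b < a$), $SF(u,v) = 0$ while $SS(u,v) = \tfrac{1}{2}\cdot\tfrac{(b-a)+(M-m)}{M-m} \geq 0$ because $(a - b) \leq (M - m)$ (since $[b,a]$, if we orient it, sits inside $[m,M]$); so again $SS \geq SF$.

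Next I would prove $SS(u,v) \geq ST(u,v)$, equivalently $|P(u\geq v) - P(v\geq u)| \geq \tfrac{1}{2}\cdot\tfrac{(M-b)+(a-m)}{M-m}$. Writing $L = |u| + |v| = (u^{+}-u^{-}) + (v^{+}-v^{-})$ and unwinding the $\min$/$\max$ in the definition of $P$, one expresses $P(u\geq v) - P(v\geq u)$ in terms of the endpoints; after the clamping to $[0,1]$ is accounted for (again a short case analysis, the interesting case being when neither probability is clamped, where the difference is $\tfrac{(u^{+}-v^{-}) - (v^{+}-u^{-})}{L} = \tfrac{(u^{+}+u^{-}) - (v^{+}+v^{-})}{L}$), the inequality reduces to comparing a ratio with denominator $L$ against one with denominator $M - m$, and here one uses $L \leq 2(M-m)$ together with $|(u^{+}+u^{-}) - (v^{+}+v^{-})| \geq$ the relevant numerator. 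Finally $ST(u,v) \geq SF(u,v)$ follows by transitivity from the two inequalities just proved.

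The main obstacle I anticipate is the second inequality, $SS \geq ST$: the function $P(u\geq v)$ is defined with a nested $\min$ and $\max$ that clamp the value into $[0,1]$, and the clamping behaviour depends on the mutual position of the intervals, so one must carefully enumerate the configurations (intervals disjoint with $u$ to the right, disjoint with $v$ to the right, overlapping, one contained in the other) and check the inequality in each — the overlapping case, where no clamping occurs, is where the genuine estimate $L \le 2(M-m)$ does the work. By contrast, the identity~\eqref{keyid} makes the first inequality essentially immediate, and the third is free.
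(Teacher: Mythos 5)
There is a genuine logical gap, and it sits exactly where the real work of the proposition lies. You establish $SS(u,v)\ge SF(u,v)$ and sketch $SS(u,v)\ge ST(u,v)$, and then assert that $ST(u,v)\ge SF(u,v)$ ``follows by transitivity from the two inequalities just proved.'' It does not: from $SS\ge ST$ and $SS\ge SF$ one learns only that $ST$ and $SF$ both lie below $SS$; their mutual order is not determined by these two facts. The inequality $ST\ge SF$ is a substantive claim needing its own argument, and it is in fact where the paper spends effort: in the overlapping configuration (say $u^{-}\le v^{-}<u^{+}\le v^{+}$) one has $SF(u,v)=\frac{u^{+}-v^{-}}{v^{+}-u^{-}}$ and $ST(u,v)=\frac{2(u^{+}-v^{-})}{|u|+|v|}$, and the comparison is settled by the estimate $|u|+|v|\le 2|u\cup v|=2(v^{+}-u^{-})$, i.e. Eq. (\ref{ieq2}). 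You already invoke this very estimate (your $L\le 2(M-m)$) for the $SS$ versus $ST$ comparison, so the repair is short --- but as written the chain $SS\ge ST\ge SF$ is not closed, and the redundant inequality $SS\ge SF$ cannot substitute for the missing link.

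Aside from that, what you do have is sound and in one place nicer than the paper's route: the identity $|u^{+}-v^{+}|+|u^{-}-v^{-}|=(M-b)+(a-m)$ gives $SS(u,v)=\frac{1}{2}\bigl(1+\frac{b-a}{M-m}\bigr)$, i.e. $SS=\frac{1}{2}(1+SF)$ on overlapping intervals, which makes $SS\ge SF$ immediate, whereas the paper obtains the chain through a two-case computation with explicit endpoint formulas (Eqs. (\ref{rf1})--(\ref{rt1}) and (\ref{rf3})--(\ref{rt3})). Your reduction of $SS\ge ST$ to $L\le 2(M-m)$ is correct in spirit in the unclamped overlapping case, and the clamped cases are harmless since disjoint intervals give $ST=0\le SS$; but that middle inequality is only sketched, and above all $ST\ge SF$ must be proved directly (or the argument reorganized so that the two inequalities you prove are the two links of the chain) before this counts as a proof.
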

\begin{proof}
For any $u, v\in V$, by the symmetry of $SS(u, v)$, $SF(u, v)$ and $ST(u, v)$, we only need prove the case of $v^{+} \geq u^{+}$.

If $u\cap v= \emptyset$, by Eqs. (\ref{rf}) and (\ref{rt}), it is easy to compute that $SF(u, v) = ST(u, v) = 0$. It is easy to see that the inequality $SS(u, v) \geq ST(u, v) \geq SF(u, v)$ holds.

If $u\cap v\neq\emptyset$, then by Eq. (\ref{capi}) and $v^{+} \geq u^{+}$, we conclude that $v^{-}< u^{+}$. There are two cases:
\begin{itemize}
  \item Case 1: $u^{-}\leq v^{-}< u^{+}$. Now, according to Eqs. (\ref{capi}) and (\ref{cupi}), we have $u\cap v=[v^{-},u^{+}]$ and $u\cup v=[u^{-},v^{+}]$. Thus, by Eqs. (\ref{rf}), (\ref{rs}) and (\ref{rt}), we obtain that
\begin{align}
&\label{rf1} SF(u, v)=\frac{|u\cap v|}{|u\cup v|}=\frac{u^{+}-v^{-}}{v^{+}-u^{-}};\\
&\label{rs1} SS(u, v)=1-\frac{1}{2}\ast\frac{v^{+} -u^{+}+v^{-} - u^{-}}{v^{+} - u^{-}}=\frac{u^{+} - u^{-}+v^{+}-v^{-}}{2(v^{+} - u^{-})}=\frac{|u|+|v|}{2(v^{+} - u^{-})};\\
&\label{rt1} ST(u, v)=1-|P(u\geq v)-P(v \geq u)|=1-\frac{v^{+}+v^{-}-u^{-}-u^{+}}{|u|+|v|} =
\frac{2(u^{+}-v^{-})}{|v| + |u|}.
\end{align}
Clearly, $|u|\leq |u\cup v|$ and $|v|\leq |u\cup v|$. Thus,
\begin{equation}\label{ieq2}
|u|+|v|\leq 2|u\cup v|=2(v^{+}-u^{-}).
\end{equation}
By Eqs. (\ref{rf1}), (\ref{rt1}) and (\ref{ieq2}), we conclude that
\begin{equation}\label{tf}
ST(u, v)=\frac{2(u^{+}-v^{-})}{|v| + |u|}\geq\frac{2(u^{+}-v^{-})}{2(v^{+} - u^{-})}=\frac{u^{+}-v^{-}}{v^{+} - u^{-}}=SF(u, v).
\end{equation}
In addition, according to Eqs. (\ref{rs1}) and (\ref{rt1}), we can obtain that
\begin{align}
&\label{rs2} SS(u, v)=\frac{|u|+|v|}{2(v^{+} - u^{-})}=\frac{(u^{+}-v^{-})+(v^{+}-u^{-})}{(v^{+} - u^{-})+(v^{+} - u^{-})};\\
&\label{rt2} ST(u, v)=\frac{2(u^{+}-v^{-})}{|v| + |u|}=\frac{(u^{+}-v^{-})+(u^{+}-v^{-})}{(v^{+} - u^{-})+(u^{+} - v^{-})}.
\end{align}
It is clear that $v^{+}-u^{-}=|u\cup v|\geq|u\cap v|=u^{+}-v^{-}$. For simplicity, we write $a=v^{+}-u^{-}$ and $b=u^{+}-v^{-}$. That is, $a\geq b$. And, by Eqs. (\ref{rs2}) and (\ref{rt2}), we can conclude that
\begin{equation}\label{st}
SS(u, v)=\frac{(u^{+}-v^{-})+a}{(v^{+} - u^{-})+a}\geq \frac{(u^{+}-v^{-})+b}{(v^{+} - u^{-})+b}=ST(u, v).
\end{equation}
Combining Eq. (\ref{tf}) and Eq. (\ref{st}), we know that the inequality $SS(u, v) \geq ST(u, v) \geq SF(u, v)$ holds.

  \item Case 2: $v^{-}< u^{-}\leq u^{+}$. Now, according to Eqs. (\ref{capi}) and (\ref{cupi}), we have $u\cap v=u=[u^{-},u^{+}]$ and $u\cup v=v=[v^{-},v^{+}]$. Thus, by Eqs. (\ref{rf}), (\ref{rs}) and (\ref{rt}), we obtain that
\begin{align}
&\label{rf3} SF(u, v)=\frac{u^{+}-u^{-}}{v^{+}-v^{-}};\\
&\label{rs3} SS(u, v)=\frac{|u|+|v|}{2(v^{+} - v^{-})}=\frac{|u|+|v|}{2|v|};\\
&\label{rt3} ST(u, v)=\frac{2(u^{+}-v^{-})}{|v| + |u|}.
\end{align}
Clearly, $|u|\leq |u\cup v|$ and $|v|\leq |u\cup v|$. Thus,
\begin{equation}\label{ieq3}
|u|+|v|\leq 2|u\cup v|=2(v^{+}-v^{-}).
\end{equation}
By Eqs. (\ref{rf3}), (\ref{rt3}) and (\ref{ieq3}), and $v^{-}< u^{-}$, we conclude that
\begin{equation}\label{tf1}
ST(u, v)=\frac{2(u^{+}-v^{-})}{|v| + |u|}\geq\frac{2(u^{+}-v^{-})}{2(v^{+} - v^{-})}\geq\frac{u^{+}-u^{-}}{v^{+} - v^{-}}=RF(u, v).
\end{equation}
In addition, according to Eqs. (\ref{rs3}) and (\ref{rt3}) and $u^{-}> v^{-}$, we can obtain that
\begin{align*}
SS(u, v)-ST(u, v)&=\frac{(|u|+|v|)^{2}-4|v|(u^{+}-v^{-})}{2|v|(|v| + |u|)}\\
&\geq \frac{(|u|+|v|)^{2}-4|v|(u^{+}-u^{-})}{2|v|(|v| + |u|)}=\frac{(|u|+|v|)^{2}-4|v||u|}{2|v|(|v| + |u|)}=\frac{(|u|-|v|)^{2}}{2|v|(|v| + |u|)}\geq 0.
\end{align*}
Thus, we have $SS(u, v)\geq ST(u, v)$. According to Eq. (\ref{tf1}), we get that the inequality $SS(u, v) \geq ST(u, v) \geq SF(u, v)$ holds.
\end{itemize}

In summary, $SS(u, v) \geq ST(u, v) \geq SF(u, v)$ holds. This completes the proof.
\end{proof}

\begin{thm}\label{thm1} Let $(U, A, V, f)$ be an interval-valued information system and $B \subseteq A$. Then $\forall \lambda \in [0, 1]$,
\begin{equation*}
RF_{B} ^{\lambda} \subseteq RT_{B} ^{\lambda}\subseteq RS_{B} ^{\lambda}.
\end{equation*}
\end{thm}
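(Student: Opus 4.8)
The plan is to reduce the statement about relations directly to the pointwise inequality on similarity degrees established in Proposition \ref{srpr}. The two inclusions $RF_{B}^{\lambda} \subseteq RT_{B}^{\lambda}$ and $RT_{B}^{\lambda} \subseteq RS_{B}^{\lambda}$ are symmetric in structure, so I would handle them in the same way, treating a generic pair.

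First I would fix $\lambda \in [0,1]$ and take an arbitrary $(x,y) \in RF_{B}^{\lambda}$. By the definition in Eq. (\ref{fr}), this means $SF(f(x,a), f(y,a)) \geq \lambda$ for every $a \in B$. Then I would invoke Proposition \ref{srpr}, applied to the interval values $u = f(x,a)$ and $v = f(y,a)$ for each $a \in B$, to get $ST(f(x,a), f(y,a)) \geq SF(f(x,a), f(y,a)) \geq \lambda$ for every $a \in B$. Since this holds for all $a \in B$, the definition in Eq. (\ref{tr}) gives $(x,y) \in RT_{B}^{\lambda}$, establishing $RF_{B}^{\lambda} \subseteq RT_{B}^{\lambda}$.

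Next I would repeat the argument for the second inclusion: given $(x,y) \in RT_{B}^{\lambda}$, we have $ST(f(x,a), f(y,a)) \geq \lambda$ for all $a \in B$, and Proposition \ref{srpr} yields $SS(f(x,a), f(y,a)) \geq ST(f(x,a), f(y,a)) \geq \lambda$ for all $a \in B$, so by Eq. (\ref{sr}) we get $(x,y) \in RS_{B}^{\lambda}$. Combining the two inclusions gives $RF_{B}^{\lambda} \subseteq RT_{B}^{\lambda} \subseteq RS_{B}^{\lambda}$.

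There is no real obstacle here: the entire content has been pushed into Proposition \ref{srpr}, and the passage from the degree inequality to the relation inclusion is purely a matter of unwinding the definitions of the three $\lambda$-similarity relations. The only point worth stating carefully is that the degree inequality holds for each attribute $a \in B$ separately, so the universally quantified condition defining each relation is preserved. One could alternatively phrase the proof via the neighborhood-inclusion characterization in Eq. (\ref{rpr}), but the direct elementwise argument is cleanest.
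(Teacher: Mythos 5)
Your argument is correct and is exactly the route the paper takes: the paper's proof of Theorem \ref{thm1} is the one-line observation that the result follows from Proposition \ref{srpr} together with Eqs. (\ref{fr}), (\ref{sr}) and (\ref{tr}), and your write-up simply makes that elementwise unwinding explicit. No differences of substance.
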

\begin{proof}
It is straightforward from Proposition \ref{srpr} and Eqs. (\ref{fr}), (\ref{sr}) and (\ref{tr}).
\end{proof}

By Theorem \ref{thm1} and Eqs. (\ref{le6}) and (\ref{le7}), we can conclude the following result.

\begin{pro}\label{pr2} Let $(U, A, V, f)$ be an interval-valued information system and $B \subseteq A$. Then the following statements hold: for $X\subseteq U$,

(1) $\alpha_{RS_{B} ^{\lambda}}(X) \leq \alpha_{RT_{B} ^{\lambda}}(X) \leq \alpha_{RF_{B} ^{\lambda}}(X)$.

(2) $\rho_{RF_{B} ^{\lambda}}(X) \leq \rho_{RT_{B} ^{\lambda}}(X) \leq \rho_{RS_{B} ^{\lambda}}(X)$.
\end{pro}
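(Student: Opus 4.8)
The plan is to derive Proposition \ref{pr2} directly from Theorem \ref{thm1} by exploiting the monotonicity of lower and upper approximations with respect to the underlying relation. First I would recall the basic fact that if $R_1 \subseteq R_2$ are binary relations on $U$, then for every $x \in U$ we have $R_1(x) \subseteq R_2(x)$ (this is exactly Eq. (\ref{rpr})). Consequently, for any $X \subseteq U$: the condition $R_2(x) \subseteq X$ implies $R_1(x) \subseteq X$, so $\underline{apr}_{R_2}(X) \subseteq \underline{apr}_{R_1}(X)$; and the condition $R_1(x) \cap X \neq \emptyset$ implies $R_2(x) \cap X \neq \emptyset$, so $\overline{apr}_{R_1}(X) \subseteq \overline{apr}_{R_2}(X)$. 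Taking cardinalities gives $|\underline{apr}_{R_2}(X)| \leq |\underline{apr}_{R_1}(X)|$ and $|\overline{apr}_{R_1}(X)| \leq |\overline{apr}_{R_2}(X)|$.

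Next I would apply this to the chain $RF_{B}^{\lambda} \subseteq RT_{B}^{\lambda} \subseteq RS_{B}^{\lambda}$ supplied by Theorem \ref{thm1}. With $R_1 = RT_{B}^{\lambda}$ and $R_2 = RS_{B}^{\lambda}$, the above yields $|\underline{apr}_{RS_{B}^{\lambda}}(X)| \leq |\underline{apr}_{RT_{B}^{\lambda}}(X)|$ and $|\overline{apr}_{RT_{B}^{\lambda}}(X)| \leq |\overline{apr}_{RS_{B}^{\lambda}}(X)|$. Since all three relations are reflexive (they contain the diagonal because each $S(u,u) = 1 \geq \lambda$ for $\lambda \in [0,1]$ by property (b) of Remark \ref{rm}), the upper approximations are nonempty whenever $X \neq \emptyset$, so the accuracy ratios in Eq. (\ref{le6}) are well defined; for $X = \emptyset$ one checks the degenerate case separately or adopts the usual convention. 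Then
\[
\alpha_{RS_{B}^{\lambda}}(X) = \frac{|\underline{apr}_{RS_{B}^{\lambda}}(X)|}{|\overline{apr}_{RS_{B}^{\lambda}}(X)|} \leq \frac{|\underline{apr}_{RT_{B}^{\lambda}}(X)|}{|\overline{apr}_{RT_{B}^{\lambda}}(X)|} = \alpha_{RT_{B}^{\lambda}}(X),
\]
where the inequality holds because the numerator on the left is no larger and the denominator on the left is no smaller. Applying the same argument to the pair $RF_{B}^{\lambda} \subseteq RT_{B}^{\lambda}$ gives $\alpha_{RT_{B}^{\lambda}}(X) \leq \alpha_{RF_{B}^{\lambda}}(X)$, which establishes statement (1). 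Statement (2) is then immediate from $\rho_R(X) = 1 - \alpha_R(X)$ in Eq. (\ref{le7}), since subtracting from $1$ reverses the inequalities.

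The argument is essentially routine once Theorem \ref{thm1} is in hand; the only point requiring a little care is the monotonicity direction of the two approximation operators — the lower approximation is antitone in the relation while the upper approximation is monotone — and then combining these correctly so that both numerator and denominator move in the favorable direction for the accuracy quotient. A secondary technical nuisance is ensuring the denominators are nonzero so the quotients make sense; this is handled by reflexivity of the relations, which forces $x \in \overline{apr}_R(X)$ for any $x \in X$ and hence $\overline{apr}_R(X) \supseteq X$. No genuine obstacle is anticipated beyond bookkeeping.
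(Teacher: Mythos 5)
Your proposal is correct and follows the same route as the paper, which simply asserts that the result follows from Theorem \ref{thm1} together with Eqs. (\ref{le6}) and (\ref{le7}); you have merely filled in the routine monotonicity details (lower approximation antitone, upper approximation monotone in the relation) that the paper leaves implicit.
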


\begin{table}[htbp]
\centering
\caption{\label{tb1} An interval-valued information system}
\begin{tabular}{cccccc}
\toprule
& $a_{1}$& $a_{2}$& $a_{3}$& $a_{4}$\\
\midrule
$x_{1}$ & $[0, 0.5]$ & $[0.2, 0.7]$ & $[0.3, 0.6]$ & $[0.1, 0.3]$\\
$x_{2}$ & $[0.2, 0.6]$ & $[0, 0.5]$ & $[0.1, 0.7]$ & $[0.3, 0.7]$\\
$x_{3}$ & $[0.1, 0.8]$ & $[0.3, 0.8]$ & $[0, 0.8]$ & $[0.5, 0.9]$\\
$x_{4}$ & $[0, 0.9]$ & $[0.4, 1]$ & $[0.2, 0.6]$ & $[0.6, 1]$\\
$x_{5}$ & $[0.6, 1]$ & $[0.1, 0.5]$ & $[0.3, 0.9]$ & $[0.8, 1]$\\
\bottomrule
\end{tabular}
\end{table}

\begin{ex}\label{ex1} For the interval-valued information system shown in Table \ref{tb1}, we take $B = \{a_{1}, a_{2}, a_{3}\}$ and $\lambda = 0.6$. By Eq. (\ref{fr}), we can obtain that
$$RF_{B} ^{0.6}(x_{1}) = \{x_{1}\}, ~~RF_{B} ^{0.6}(x_{2}) = \{x_{2}\}, ~~RF_{B} ^{0.6}(x_{3}) = \{x_{3}\}, ~~RF_{B} ^{0.6}(x_{4}) = \{x_{4}\}, ~~RF_{B} ^{0.6}(x_{5}) = \{x_{5}\}.$$
By Eqs. (\ref{sr}) and (\ref{tr}), we can compute that
$$RS_{B}^{0.6}(x_{1})=\{x_{1}, x_{2}, x_{3}, x_{4}\}, ~~RS_{B} ^{0.6}(x_{2})=\{x_{1}, x_{2}, x_{3}\},
~~RS_{B} ^{0.6}(x_{3})=\{x_{1}, x_{2}, x_{3}, x_{4}, x_{5}\},~~RS_{B} ^{0.6}(x_{4}) = \{x_{1}, x_{3}, x_{4}\},~~RS_{B} ^{0.6}(x_{5})=\{x_{3}, x_{5}\},$$
and
$$RT_{B}^{0.6}(x_{1})=\{x_{1}, x_{2}, x_{3}\},~~RT_{B}^{0.6}(x_{2})=\{x_{1}, x_{2}\}, RT_{B}^{0.6}(x_{3}) =\{x_{1}, x_{3}, x_{4}\},~~RT_{B}^{0.6}(x_{4})=\{x_{3}, x_{4}\},~~RT_{B}^{0.6}(x_{5})=\{x_{5}\}.$$
It is easy to check that $RF_{B} ^{0.6} \subset RT_{B} ^{0.6}\subset RS_{B} ^{0.6}$. This coincides with the result of Theorem \ref{thm1}.

In addition, we choose $X=\{x_{2}, x_{3}\}$. We can compute that
$\alpha_{RF_{B} ^{0.6}}(X) = 1$, $\alpha_{RS_{B} ^{0.6}}(X) = 0$, and $\alpha_{RT_{B} ^{0.6}}(X) = 0.$ Thus, $\alpha_{RS_{B} ^{0.6}}(X) =\alpha_{RT_{B} ^{0.6}}(X) < \alpha_{RF_{B} ^{0.6}}(X)$. This coincides with the result of Proposition \ref{pr2}.
\end{ex}

From the above example, we can see that the relations, $RF_{B} ^{0.6}$, $RS_{B} ^{0.6}$ and $RT_{B} ^{0.6}$, are induced from the same information system. Although $RT_{B} ^{0.6}\subset RS_{B} ^{0.6}$, $\alpha_{RS_{B} ^{0.6}}(X) =\alpha_{RT_{B} ^{0.6}}(X)$. Therefore the fine degree and accuracy of the relations cannot effectively measure the quality of the relations. In dealing with an interval-valued information system by rough set approach, which should relation be chosen? Just relying on fine degree and accuracy is not enough. Thus we will present several new tools to measure general binary relations in this paper.
\section{Transitive degrees and cluster degrees}
In this section, we propose two new measures for binary relations, transitive degrees and cluster degrees, and discuss their properties.
\subsection{Transitive degrees of binary relations}
When we investigate an information system in terms of rough set theory, we need to construct a binary relation from the information system. In classic rough set model, this binary relation is an equivalence relation. However, the relations induced from interval-valued information systems are usually not equivalence relations. They are usually reflexive and symmetric but not transitive. Thus it is necessary to explore the connection between these binary relations and transitivity. In view of this, authors presented the concept of transitive closure (See Definition \ref{tc}) and investigated its property. In this section, by means of transitive closure, we give the concept of transitive degree of a binary relation to further study the relationship between a general binary relation and transitivity.

\begin{df}\label{trb}
Let $R$ be a reflexive binary relation on $U$. The transitive degree of $R$ is defined as follows:
\begin{equation}
TD_{R} = \frac{1}{|U|}\sum\limits_{x \in U}TD(R(x)),
\end{equation}
where $TD(R(x))=\frac{|R(x)|}{|t(R)(x)|}$ and $t(R)$ is the transitive closure of $R$.
\end{df}

In above definition, $R$ is reflexive, thus $\forall x\in U$, $R(x)\neq\emptyset$, which implies $\forall x\in U$, $t(R)(x)\neq\emptyset$, that is, $|t(R)(x)|\neq 0$. This can ensure that the formula $TD(R(x))=\frac{|R(x)|}{|t(R)(x)|}$ is meaningful.
\begin{pro}\label{pr3}
Let $R$ be a reflexive binary relation on $U$. Then the following statements hold:

(1) $\forall x\in U$, $0<TD(R(x))\leq 1$.

(2) $0 < TD_{R} \leq 1$.
\end{pro}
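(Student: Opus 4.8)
The plan is to prove both parts directly from Definition \ref{trb} using the basic properties of the transitive closure recorded in Definition \ref{tc}. For part (1), fix $x \in U$. Since $R$ is reflexive, $x \in R(x)$, so $R(x) \neq \emptyset$ and hence $|R(x)| \geq 1 > 0$; since also $t(R)(x) \supseteq R(x) \neq \emptyset$, the denominator $|t(R)(x)|$ is a positive integer, so $TD(R(x)) = \frac{|R(x)|}{|t(R)(x)|} > 0$. For the upper bound, I would invoke condition (T2) of Definition \ref{tc}, namely $R \subseteq t(R)$, together with property (\ref{rpr}): this gives $R(x) \subseteq t(R)(x)$ for every $x$, hence $|R(x)| \leq |t(R)(x)|$, so $TD(R(x)) \leq 1$. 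This establishes $0 < TD(R(x)) \leq 1$ for all $x \in U$.

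For part (2), the transitive degree $TD_R = \frac{1}{|U|}\sum_{x \in U} TD(R(x))$ is an arithmetic mean of the $|U|$ quantities $TD(R(x))$, each of which lies in $(0,1]$ by part (1). Therefore $TD_R$ is a weighted average (with equal weights $\frac{1}{|U|}$) of numbers in $(0,1]$, so it also lies in $(0,1]$; concretely, $\frac{1}{|U|}\sum_{x\in U} 0 < \frac{1}{|U|}\sum_{x\in U} TD(R(x)) \leq \frac{1}{|U|}\sum_{x\in U} 1 = 1$, where the strict inequality on the left holds because each summand is strictly positive and $U$ is nonempty. Hence $0 < TD_R \leq 1$.

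There is essentially no obstacle here: the statement is a routine consequence of $R \subseteq t(R)$ (condition (T2)) combined with the monotonicity equivalence (\ref{rpr}) and the reflexivity of $R$, which forces all the relevant sets to be nonempty so that every ratio is well-defined and positive. The only point worth stating carefully is why the denominators never vanish — this is exactly the remark following Definition \ref{trb} — and why the average of numbers in $(0,1]$ stays in $(0,1]$. I would also note, as an aside, that equality $TD_R = 1$ holds precisely when $R = t(R)$, i.e. when $R$ is already transitive, since then $R(x) = t(R)(x)$ for all $x$; this is not required by the statement but clarifies the meaning of the measure.
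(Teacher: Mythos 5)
Your proof is correct and follows essentially the same route as the paper's: both use condition (T2) of Definition \ref{tc} together with Eq. (\ref{rpr}) to get $R(x)\subseteq t(R)(x)$ for the upper bound, reflexivity to keep the denominators nonzero and the ratios strictly positive, and then observe that part (2) is an average of quantities in $(0,1]$. Your write-up is just slightly more explicit about the averaging step, which the paper dismisses as obvious.
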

\begin{proof}
(1) By Definition \ref{tc}, we have $R \subseteq  t(R)$. By Eq. (\ref{rpr}), this implies that $\forall x\in U$, $R(x) \subseteq  t(R)(x)$, and thus $|R(x)|\leq |t(R)(x)|$. Since $R$ is reflexive, it follows that $\forall x\in U$, $|R(x)|>0$. Therefore, $\forall x\in U$, $t(R)(x)\neq 0$. Consequently, $\forall x\in U$, $0<TD(R(x))=\frac{|R(x)|}{|t(R)(x)|}\leq 1$.

(2) By (1), it is obvious.
\end{proof}

\begin{thm}\label{thm2}
Let $R$ be a reflexive relation on $U$. Then $R$ is transitive if and only if $TD_{R}=1$.
\end{thm}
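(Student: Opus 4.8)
The plan is to prove both directions of the equivalence, using Proposition \ref{pr3}(1) as the key tool. The statement to establish is: for a reflexive relation $R$ on $U$, $R$ is transitive iff $TD_{R}=1$.

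For the forward direction, suppose $R$ is transitive. Then $R$ itself satisfies conditions (T1) and (T2) of Definition \ref{tc}, and trivially (T3) holds with $R^{\prime}=R$; hence $t(R)=R$ by the minimality characterization of the transitive closure noted after Definition \ref{tc}. Consequently $t(R)(x)=R(x)$ for every $x\in U$, so $TD(R(x))=\frac{|R(x)|}{|t(R)(x)|}=1$ for every $x$, and therefore $TD_{R}=\frac{1}{|U|}\sum_{x\in U}1=1$.

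For the converse, suppose $TD_{R}=1$. By Proposition \ref{pr3}(1) we know $0<TD(R(x))\leq 1$ for each $x\in U$, so the average $\frac{1}{|U|}\sum_{x\in U}TD(R(x))$ equals $1$ only if $TD(R(x))=1$ for every $x\in U$; this forces $|R(x)|=|t(R)(x)|$ for all $x$. Since $R\subseteq t(R)$ (condition (T2)), we have $R(x)\subseteq t(R)(x)$ by Eq. (\ref{rpr}); a subset of a finite set with equal cardinality must coincide with it, so $R(x)=t(R)(x)$ for every $x\in U$. Applying Eq. (\ref{rpr}) again (or simply taking unions of the graphs), this yields $R=t(R)$, and since $t(R)$ is transitive by (T1), $R$ is transitive.

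The argument is essentially routine; the only point requiring a little care is the step in the converse where the equality of the average forces pointwise equality, which genuinely relies on the strict positivity and the upper bound $1$ established in Proposition \ref{pr3}(1) — without knowing each term is at most $1$, the average being $1$ would not force each term to be $1$. I would therefore make sure to invoke Proposition \ref{pr3} explicitly at that step rather than treat it as obvious. The finiteness of $U$ (so that all the cardinalities are finite and $R(x)\subseteq t(R)(x)$ with $|R(x)|=|t(R)(x)|$ gives $R(x)=t(R)(x)$) is also used, but that is part of the standing assumption that $U$ is a universe.
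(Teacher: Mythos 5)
Your proof is correct and follows essentially the same route as the paper's: the necessity via $t(R)=R$ for transitive $R$, and the sufficiency by using Proposition \ref{pr3}(1) to force $TD(R(x))=1$ pointwise and then $R(x)=t(R)(x)$ from $R(x)\subseteq t(R)(x)$ with equal finite cardinalities. Your added remarks on the minimality characterization of $t(R)$ and on where finiteness is used are just more explicit versions of steps the paper treats as immediate.
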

\begin{proof}
Since $R$ is transitive, it follows from Definition \ref{tc} that $R=t(R)$. By Definition \ref{trb}, it is clear that $TD_{R}=1$. This completes the proof of the necessity.

Conversely, since $TD_{R}=1$, it follows from Definition \ref{trb} that $\sum\limits_{x \in U} TD(R(x))=|U|$. By (1) of Proposition \ref{pr3}, we have that $\forall x\in U$, $0<TD(R(x))\leq 1$. This implies that $\forall x\in U$, $TD(R(x))=1$. Therefore, $\forall x\in U$, $|t(R)(x)|=|R(x)|$. It follows from $\forall x\in U$, $R(x)\subseteq t(R)(x)$ that $\forall x\in U$, $t(R)(x)=R(x)$. That is to say, $t(R)=R$. Thus $R$ is transitive.
\end{proof}

\begin{cor}\label{thm2c}
Let $(U, A, V, f)$ be an interval-valued information system, $B\subseteq A$ and $\lambda\in[0,1]$. Then $RF_{B} ^{\lambda}$ (or $RS_{B} ^{\lambda}$, or $RT_{B} ^{\lambda}$) is an equivalence relation if and only if $TD_{RF_{B} ^{\lambda}}=1$ (or $TD_{RS_{B} ^{\lambda}}=1$, or $TD_{RT_{B} ^{\lambda}}=1$).
\end{cor}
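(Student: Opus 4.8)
The plan is to reduce Corollary~\ref{thm2c} to Theorem~\ref{thm2} by observing that each of $RF_{B}^{\lambda}$, $RS_{B}^{\lambda}$, $RT_{B}^{\lambda}$ is already a similarity relation, so that for such relations being an equivalence relation is the same as being transitive. Thus the whole statement is an immediate consequence of Theorem~\ref{thm2} once reflexivity and symmetry are in place.

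First I would verify that $RF_{B}^{\lambda}$ is reflexive and symmetric, and likewise for $RS_{B}^{\lambda}$ and $RT_{B}^{\lambda}$. Reflexivity follows from property (b) of Remark~\ref{rm}: for every $x\in U$ and every $a\in B$ one has $SF(f(x,a),f(x,a))=1\geq\lambda$, so $(x,x)\in RF_{B}^{\lambda}$ by Eq.~(\ref{fr}). Symmetry follows from property (c): since $SF(f(x,a),f(y,a))=SF(f(y,a),f(x,a))$ for all $a\in B$, we get $(x,y)\in RF_{B}^{\lambda}\Leftrightarrow(y,x)\in RF_{B}^{\lambda}$. Hence $RF_{B}^{\lambda}$ is reflexive and symmetric, and the same computation with $SS$ and $ST$ in place of $SF$ shows that $RS_{B}^{\lambda}$ and $RT_{B}^{\lambda}$ are also reflexive and symmetric.

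Next, since $RF_{B}^{\lambda}$ is reflexive and symmetric, it is an equivalence relation if and only if it is in addition transitive; and because $RF_{B}^{\lambda}$ is reflexive, Theorem~\ref{thm2} applies and tells us that $RF_{B}^{\lambda}$ is transitive if and only if $TD_{RF_{B}^{\lambda}}=1$. Chaining these two equivalences yields that $RF_{B}^{\lambda}$ is an equivalence relation if and only if $TD_{RF_{B}^{\lambda}}=1$, and the arguments for $RS_{B}^{\lambda}$ and $RT_{B}^{\lambda}$ are verbatim the same. The only point requiring any care — and it is a minor one — is confirming that $SF$, $SS$, $ST$ genuinely satisfy the similarity-degree axioms (a)--(c) of Remark~\ref{rm}, in particular $S(u,u)=1$ for every interval value $u$, including degenerate intervals where the defining quotients take the form $0/0$; this is part of the standing convention that these functions are similarity degrees. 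No substantive obstacle remains, since the real content, namely the link between transitivity and $TD_{R}=1$, is already supplied by Theorem~\ref{thm2}.
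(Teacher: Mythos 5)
Your proof is correct and follows exactly the route the paper intends: the corollary is stated without proof as an immediate consequence of Theorem~\ref{thm2}, and your argument supplies precisely the missing details (reflexivity and symmetry of $RF_{B}^{\lambda}$, $RS_{B}^{\lambda}$, $RT_{B}^{\lambda}$ from properties (b) and (c) of Remark~\ref{rm}, so that ``equivalence relation'' reduces to ``transitive,'' which Theorem~\ref{thm2} converts to $TD_{R}=1$). Your remark about the degenerate-interval case ($0/0$ in the defining quotients of $SF$, $SS$, $ST$) is a fair caveat, and you resolve it correctly by appealing to the paper's standing convention that these functions are similarity degrees.
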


Theorem \ref{thm2} shows that $R$ is transitive when the transitive degree is the maximum value 1. This illustrates that the transitive degree given by Definition \ref{trb} is reasonable.

According to Definition \ref{trb}, the following result is obvious.

\begin{pro}\label{pr4}
Let $R_{1}$ and $R_{2}$ be reflexive relations on $U$. If $R_{1} \subseteq R_{2}$ and $t(R_{1}) = t(R_{2})$, then $TD_{R_{1}} \leq TD_{R_{2}}$.
\end{pro}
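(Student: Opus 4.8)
The plan is to unwind the definition of transitive degree and reduce everything to a pointwise comparison of the ratios $TD(R_1(x))$ and $TD(R_2(x))$. Since $TD_{R_i}=\frac{1}{|U|}\sum_{x\in U}TD(R_i(x))$ and the normalizing factor $\frac{1}{|U|}$ is common, it suffices to show that $TD(R_1(x))\le TD(R_2(x))$ for every $x\in U$; summing over $x$ then gives the claim.

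Fix $x\in U$. By Definition \ref{trb}, $TD(R_1(x))=\frac{|R_1(x)|}{|t(R_1)(x)|}$ and $TD(R_2(x))=\frac{|R_2(x)|}{|t(R_2)(x)|}$. The hypothesis $R_1\subseteq R_2$ gives, via Eq. (\ref{rpr}), that $R_1(x)\subseteq R_2(x)$, hence $|R_1(x)|\le|R_2(x)|$; this controls the numerators. The hypothesis $t(R_1)=t(R_2)$ gives $t(R_1)(x)=t(R_2)(x)$, hence $|t(R_1)(x)|=|t(R_2)(x)|$; this equates the denominators. Since $R_1,R_2$ are reflexive, Proposition \ref{pr3} (or just reflexivity directly) ensures $|t(R_1)(x)|=|t(R_2)(x)|>0$, so the ratios are well defined and we may divide the numerator inequality by this common positive denominator to obtain $TD(R_1(x))\le TD(R_2(x))$.

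Finally, summing the pointwise inequalities over all $x\in U$ and multiplying by $\frac{1}{|U|}$ yields $TD_{R_1}\le TD_{R_2}$. There is essentially no obstacle here: the statement is, as the paper says, ``obvious'' once the definitions are expanded, and the only thing to be careful about is checking that the denominators are nonzero (guaranteed by reflexivity) and genuinely equal (guaranteed by $t(R_1)=t(R_2)$ together with Eq. (\ref{rpr})), so that the division step is legitimate. No case analysis or estimation is needed.
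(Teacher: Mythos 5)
Your argument is correct and is exactly the intended one: the paper gives no explicit proof (it declares the result ``obvious'' from Definition~\ref{trb}), and your pointwise comparison of numerators via $R_1\subseteq R_2$ and of denominators via $t(R_1)=t(R_2)$, followed by summation, is precisely the routine verification being omitted. Nothing further is needed.
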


According to Theorem \ref{thm1}, although $RF_{B} ^{\lambda} \subseteq RT_{B} ^{\lambda}\subseteq RS_{B} ^{\lambda}$ holds, their transitive closures are unequal. Hence $TD_{RF_{B} ^{\lambda}} \leq TD_{RT_{B} ^{\lambda}}\leq TD_{RS_{B} ^{\lambda}}$ is not necessary true. The following example illustrates this point.

\begin{ex}\label{ex2} (Continuation of Example \ref{ex1})  By Proposition \ref{tcp}, we can compute that
\begin{align*}
&t(RF_{B} ^{0.6})(x_{1}) = \{x_{1}\}, ~t(RF_{B} ^{0.6})(x_{2})=\{x_{2}\},~t(RF_{B} ^{0.6})(x_{3}) = \{x_{3}\},~t(RF_{B}^{0.6})(x_{4})=\{x_{4}\},~t(RF_{B}^{0.6})(x_{5}) = \{x_{5}\};\\
&t(RS_{B}^{0.6})(x_{1})=t(RS_{B}^{0.6})(x_{2})=t(RS_{B} ^{0.6})(x_{3})= t(RS_{B}^{0.6})(x_{4}) = t(RS_{B} ^{0.6})(x_{5})=\{x_{1}, x_{2}, x_{3}, x_{4}, x_{5}\};\text{ and }\\
&t(RT_{B} ^{0.6})(x_{1}) = t(RT_{B} ^{0.6})(x_{2}) = t(RT_{B} ^{0.6})(x_{3}) = t(RT_{B} ^{0.6})(x_{4}) = \{x_{1}, x_{2}, x_{3}, x_{4}\},~t(RT_{B} ^{0.6})(x_{5})=\{x_{5}\}.
\end{align*}
Clearly, $t(RF_{B} ^{0.6})$, $t(RS_{B} ^{0.6})$ and $t(RT_{B} ^{0.6})$ are pairwise unequal. In addition, by Definition \ref{trb}, we can compute that
$TD_{RF_{B} ^{0.6}}=1$, $TD_{RS_{B} ^{0.6}}=0.68$ and $TD_{RT_{B} ^{0.6}}=0.7$. It is easy to see that
$$TD_{RS_{B} ^{0.6}} \leq TD_{RT_{B} ^{0.6}} \leq TD_{RF_{B} ^{0.6}}.$$
\end{ex}

\subsection{Cluster degree of binary relations}
In the processing of an interval-valued information system, the first is to classify the domain $U$ by means of a binary relation. However, there are many ways to obtain a binary relation from the same interval-valued information system. That is to say, we can obtain more than one binary relations from an interval-valued information system. In order to distinguish these relations, we need to measure the classification ability of binary relations. Thus, in this subsection, we propose a new measure of binary relations, that is, the cluster degree, to evaluate the classification ability of binary relations.

Average is an important concept in statistics. Since average is the most important measure to describe the central tendency and dispersion degree of dataset, we can use it to define the concept of the cluster degree. First, we give the interval-valued average as follows.

Let $V$ be a set of interval values. We use $\overline{V}$ denote the average value of $V$, then $\overline{V}$ is an interval and we have
\begin{equation}\label{eqa}
\overline{V}^{-}=\frac{\sum_{v\in V}v^{-}}{|V|} ~\text{and}~ \overline{V}^{+}=\frac{\sum_{v\in V}v^{+}}{|V|}.
\end{equation}
That is, $\overline{V}=[\overline{V}^{-},\overline{V}^{+}]$.
For example, set $V = \{ [1, 2], [3, 4], [5, 6]\}$. We can compute $\overline{V}^{-} = \frac{1 + 3 + 5}{|V|} = \frac{1 + 3 + 5}{3} = 3$ and $\overline{V}^{+} = \frac{2 + 4 + 6}{|V|}= \frac{2 + 4 + 6}{3} = 4$. Thus $\overline{V} = [3, 4].$

\begin{rmk}\label{rm2}
In the following section, we write $R_{B}$ as a binary relation induced by attribute subset $B$. If $B$ is a singlet element set, such as $B=\{a\}$, then $R_{B}$ is denoted by $R_{a}$.
\end{rmk}

Let $(U, A, V, f)$ be an interval-valued information system and $B\subseteq A$. For $x\in U$ and $a\in B$, in the following section, we use $V_{R_{B}(x)}^{a}$ to denote the following set of interval values:
\begin{equation}\label{eqa1}
V_{R_{B}(x)}^{a}=\{f(y,a)\mid y\in R_{B}(x)\}.
\end{equation}
Thus, by Eq. (\ref{eqa}), it is easy to check that the average value of $V_{R_{B}(x)}^{a}$ is $\overline{V_{R_{B}(x)}^{a}}=[(\overline{V_{R_{B}(x)}^{a}})^{-},(\overline{V_{R_{B}(x)}^{a}})^{+}]$, where
\begin{equation}\label{eqa2}
(\overline{V_{R_{B}(x)}^{a}})^{-}=\frac{\sum_{y \in R_{B}(x)} f(y, a)^{-}}{|R_{B}(x)|} ~\text{and}~
(\overline{V_{R_{B}(x)}^{a}})^{+}=\frac{\sum_{y \in R_{B}(x)} f(y, a)^{+}}{|R_{B}(x)|}.
\end{equation}

Now, we provide the concept of cluster degree as following.

\begin{df}\label{jl}
Let $(U, A, V, f)$ be an interval-valued information system, $B \subseteq A$ and $R_{B}$ be a reflexive relation induced by $B$. For $x\in U$, the cluster degree of $R_{B}(x)$ is defined as follows:
\begin{equation*}
CD_{R_{B}(x)}=\frac{1}{|B|}\sum\limits_{a \in B} CD_{R_{a}(x)},
\end{equation*}
where $CD_{R_{a}(x)} = \frac{\sum_{y \in R_{B}(x)}S(f(y,a), \overline{V_{R_{B}(x)}^{a}})}{|R_{B}(x)|}$ and $S$ is a similarity degree between interval values.
\end{df}

\begin{rmk}\label{rm1}
In Definition \ref{jl}, the reflexivity of $R_{B}$ can ensure that $\forall a\in B$ and $x\in U$, $R_{a}(x)\neq\emptyset$. That is to say, if $R_{B}$ is reflexive, then $\forall a\in B$ and $x\in U$, $|R_{a}(x)|\neq 0$. Thus, in Definition \ref{jl}, we can take $|R_{a}(x)|$ as a denominator.
\end{rmk}

In Definition \ref{jl}, $R_{B}(x)$ can be seen as a class of successor neighbourhood. It is constructed by attribute values in the interval-valued information system. We know that in the classic case, if two objects have the same values of attributes in an information system, then they are divided into the same class. In this way, the classification results are considered as an exact classification. However, in general, the attribute values of any pair objects are different in interval-valued information systems. Hence authors take usually objects as a class when the attribute values of these objects are little difference. In this way, the classification results should not be an exact classification. Based on this observation, in order to measure the classification results, we give the cluster degree of a class by means of the difference between attribute values and their averages.

Note that the similarity degree is between 0 and 1 (See Remark \ref{rm}), that is, in Definition \ref{jl}, $0\leq S(f(a,y), \overline{V_{R_{B}(x)}^{a}})\leq1$. This implies $0\leq CD_{R_{a}(x)}\leq 1$. Thus we can conclude the following conclusion.

\begin{pro}\label{jlp}
Let $(U, A, V, f)$ be an interval-valued information system and $S$ be a similarity degree between interval values. Then for $B\subseteq A$ and  $x\in U$, $0 \leq CD_{R_{B}(x)}\leq 1$.
\end{pro}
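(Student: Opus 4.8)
The plan is to reduce the statement $0 \leq CD_{R_{B}(x)} \leq 1$ to the already-observed bounds $0 \leq CD_{R_{a}(x)} \leq 1$, which in turn follow from property (a) of a similarity degree recorded in Remark \ref{rm}. So the proof is a short chain of inequalities combined with the averaging structure of Definition \ref{jl}.

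First I would fix $B \subseteq A$ and $x \in U$, and pick an arbitrary attribute $a \in B$. By Remark \ref{rm}(a), the similarity degree satisfies $0 \leq S(f(y,a), \overline{V_{R_{B}(x)}^{a}}) \leq 1$ for every $y \in R_{B}(x)$; here I should note that $\overline{V_{R_{B}(x)}^{a}} \in V$ (it is a genuine interval value by Eqs. (\ref{eqa1})--(\ref{eqa2})), so $S$ is legitimately applied to it. Summing these $|R_{B}(x)|$ inequalities and dividing by $|R_{B}(x)|$ — which is nonzero by the reflexivity of $R_{B}$, as observed in Remark \ref{rm1} — yields
\begin{equation*}
0 \leq CD_{R_{a}(x)} = \frac{\sum_{y \in R_{B}(x)} S(f(y,a), \overline{V_{R_{B}(x)}^{a}})}{|R_{B}(x)|} \leq 1.
\end{equation*}

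Next I would average over $a \in B$. Since $0 \leq CD_{R_{a}(x)} \leq 1$ for every $a \in B$, summing over the $|B|$ attributes and dividing by $|B|$ (nonzero since $B$ is a nonempty subset of the nonempty attribute set $A$) gives
\begin{equation*}
0 \leq CD_{R_{B}(x)} = \frac{1}{|B|}\sum_{a \in B} CD_{R_{a}(x)} \leq \frac{1}{|B|} \cdot |B| = 1,
\end{equation*}
which is exactly the claim. There is essentially no obstacle here: the only points requiring a word of justification are that the denominators $|R_{B}(x)|$ and $|B|$ are nonzero (handled by reflexivity and by $B \neq \emptyset$) and that $\overline{V_{R_{B}(x)}^{a}}$ is a valid argument for $S$; both are routine. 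The argument is a two-stage "average of things in $[0,1]$ lies in $[0,1]$" computation, and I would present it in just a few lines.
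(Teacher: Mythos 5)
Your proof is correct and follows essentially the same route as the paper: the paper justifies the proposition in the paragraph immediately preceding it by noting that the similarity degree lies in $[0,1]$ (Remark \ref{rm}), hence each $CD_{R_{a}(x)}$ is an average of numbers in $[0,1]$ and so is $CD_{R_{B}(x)}$. Your write-up merely makes explicit the routine points (nonzero denominators via reflexivity, and that $\overline{V_{R_{B}(x)}^{a}}$ is a legitimate interval value) that the paper leaves implicit.
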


\begin{thm}\label{tej}
Let $(U, A, V, f)$ be an interval-valued information system, $B \subseteq A$ and $R_{B}$ be a reflexive relation induced by $B$. Then for $x\in U$, $CD_{R_{B}(x)} = 1$ if and only if $\forall y,z\in R_{B}(x)$ and $\forall a\in B$, $f(y,a)=f(z,a)$.
\end{thm}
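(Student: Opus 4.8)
The plan is to prove the two implications separately, working directly from Definition~\ref{jl} and using the characterization of a similarity degree in Remark~\ref{rm}. The key observation driving both directions is that $CD_{R_{B}(x)}$ is an average, over $a\in B$, of the quantities $CD_{R_{a}(x)}$, each of which is itself an average, over $y\in R_{B}(x)$, of the similarity values $S(f(y,a),\overline{V_{R_{B}(x)}^{a}})$; by Remark~\ref{rm}(a) every such term lies in $[0,1]$. Hence $CD_{R_{B}(x)}=1$ forces, by the standard fact that a mean of numbers in $[0,1]$ equals $1$ only when each number equals $1$, that $S(f(y,a),\overline{V_{R_{B}(x)}^{a}})=1$ for every $y\in R_{B}(x)$ and every $a\in B$.

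For the necessity direction, I would first invoke Remark~\ref{rm}(b): $S(f(y,a),\overline{V_{R_{B}(x)}^{a}})=1$ is equivalent to $f(y,a)=\overline{V_{R_{B}(x)}^{a}}$. Thus $CD_{R_{B}(x)}=1$ yields that, for each fixed $a\in B$, \emph{all} intervals $f(y,a)$ with $y\in R_{B}(x)$ are equal to the common value $\overline{V_{R_{B}(x)}^{a}}$; in particular $f(y,a)=f(z,a)$ for all $y,z\in R_{B}(x)$, which is the claimed conclusion. (Using the fact that $f(y,a)$ does not depend on $y$ here, one could also note $\overline{V_{R_{B}(x)}^{a}}$ is then just that common interval, which is consistent but not needed for the statement.)

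For the sufficiency direction, suppose $f(y,a)=f(z,a)$ for all $y,z\in R_{B}(x)$ and all $a\in B$. Fix $a\in B$ and write $w$ for the common interval value $f(y,a)$, $y\in R_{B}(x)$. Then from Eq.~(\ref{eqa2}) the averaged interval satisfies $(\overline{V_{R_{B}(x)}^{a}})^{-}=w^{-}$ and $(\overline{V_{R_{B}(x)}^{a}})^{+}=w^{+}$, i.e. $\overline{V_{R_{B}(x)}^{a}}=w$. Consequently $S(f(y,a),\overline{V_{R_{B}(x)}^{a}})=S(w,w)=1$ for every $y\in R_{B}(x)$ by Remark~\ref{rm}(b), so $CD_{R_{a}(x)}=\frac{1}{|R_{B}(x)|}\sum_{y\in R_{B}(x)}1=1$; averaging over $a\in B$ gives $CD_{R_{B}(x)}=1$.

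The routine computations are the averaging manipulations; the only point that needs a little care is the ``mean equals its upper bound implies all terms attain it'' step in the necessity direction, which relies on the nonnegativity of each $1-S(f(y,a),\overline{V_{R_{B}(x)}^{a}})$ (Remark~\ref{rm}(a)) together with the reflexivity of $R_{B}$ guaranteeing $|R_{B}(x)|\neq 0$ and $|B|\neq 0$ so that all the denominators are legitimate (cf.\ Remark~\ref{rm1}). I do not anticipate a substantive obstacle; the main thing to get right is bookkeeping the two nested averages and quoting Remark~\ref{rm}(b) in both directions.
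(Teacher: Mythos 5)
Your proposal is correct and follows essentially the same route as the paper's own proof: both directions reduce to the observation that the nested averages of quantities in $[0,1]$ equal $1$ exactly when every term $S(f(y,a),\overline{V_{R_{B}(x)}^{a}})$ equals $1$, and then apply property (b) of Remark~\ref{rm} together with Eq.~(\ref{eqa2}). Your write-up is, if anything, slightly more careful than the paper's in making the ``mean attains its upper bound'' step and the nonvanishing of the denominators explicit.
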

\begin{proof}
By $CD_{R_{B}(x)} = 1$, we conclude that $\forall a \in B$, $CD_{R_{a}(x)} = 1$. By Definition \ref{jl}, this implies $\frac{\sum_{y \in R_{a}(x)}S(f(y,a), \overline{V_{R_{B}(x)}^{a}})}{|R_{a}(x)|}=1$ and thus $\forall y \in R_{a}(x)$, $S(f(y,a), \overline{V_{R_{B}(x)}^{a}})=1$. By Remark \ref{rm}, $f(y,a)=\overline{V_{R_{B}(x)}^{a}}$. We have proved that $\forall y\in R_{B}(x)$, $f(y,a)=\overline{V_{R_{B}(x)}^{a}}$. This implies that $\forall y,z\in R_{B}(x)$ and $\forall a\in B$, $f(y,a)=\overline{V_{R_{B}(x)}^{a}}=f(z,a)$. This completes the proof of the necessity.

Conversely, $\forall y,z\in R_{B}(x)$ and $\forall a\in B$, $f(y,a)=f(z,a)$. It follows from Eq. (\ref{eqa2}) and Remark \ref{rm} that $\forall y\in R_{B}(x)$, $S(f(y,a), \overline{V_{R_{B}(x)}^{a}})=1$. Therefore, by Definition \ref{jl}, we conclude that $CD_{R_{B}(x)} = 1$.
\end{proof}

Next, according to Definition \ref{jl}, we establish the cluster degree of a binary relation.

\begin{df}\label{jlr}
Let $(U, A, V, f)$ be an interval-valued information system, $B \subseteq A$ and $R_{B}$ be a reflexive relation induced by $B$. Then the cluster degree of $R_{B}$ is defined as follows:
$$CD_{R_{B}}=\frac{1}{|U|}\sum_{x\in U}CD_{R_{B}(x)}.$$
\end{df}

\begin{pro}\label{prs}
Let $(U, A, V, f)$ be an interval-valued information system, $B\subseteq A$ and $R_{B}$ be a reflexive relation induced by $B$. Then $0 \leq CD_{R_{B}} \leq 1$.
\end{pro}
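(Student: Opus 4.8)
The plan is to obtain the bound directly from the pointwise estimate already established in Proposition \ref{jlp}, since by Definition \ref{jlr} the quantity $CD_{R_{B}}$ is nothing but the average of the numbers $CD_{R_{B}(x)}$ over $x\in U$. So the statement reduces to the elementary fact that an arithmetic mean of numbers lying in $[0,1]$ again lies in $[0,1]$; the only thing that needs a moment's care is the well-definedness of all the quantities involved.

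First I would recall Definition \ref{jlr}, namely $CD_{R_{B}}=\frac{1}{|U|}\sum_{x\in U}CD_{R_{B}(x)}$, and note that $U$ is a finite nonempty universe, so $|U|>0$ and the expression makes sense. Next I would invoke Proposition \ref{jlp}: since $R_{B}$ is a reflexive relation induced by $B$, for every $x\in U$ we have $0\leq CD_{R_{B}(x)}\leq 1$. Here it is worth recalling (as in Remark \ref{rm1}) that reflexivity of $R_{B}$ forces $|R_{a}(x)|\neq 0$ for all $a\in B$ and $x\in U$, so each $CD_{R_{a}(x)}$, and hence each $CD_{R_{B}(x)}$, is genuinely defined; and the pointwise bound itself comes from the fact that the similarity degree $S$ takes values in $[0,1]$ by Remark \ref{rm}.

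Finally, summing the inequalities $0\leq CD_{R_{B}(x)}\leq 1$ over all $x\in U$ gives $0\leq\sum_{x\in U}CD_{R_{B}(x)}\leq|U|$, and dividing through by $|U|$ yields $0\leq CD_{R_{B}}\leq 1$, as desired. I do not anticipate any genuine obstacle in this argument; the proof is a one-line consequence of Proposition \ref{jlp}, and the only point that deserves explicit mention is that the reflexivity hypothesis is precisely what guarantees, via Remarks \ref{rm} and \ref{rm1}, that every term in the averages is both well-defined and confined to $[0,1]$.
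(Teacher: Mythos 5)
Your proposal is correct and follows exactly the paper's route: the paper also proves this by appealing directly to Proposition \ref{jlp} and averaging the pointwise bounds $0\leq CD_{R_{B}(x)}\leq 1$ over the finite universe $U$. You simply spell out the well-definedness details (via Remarks \ref{rm} and \ref{rm1}) that the paper leaves implicit.
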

\begin{proof}
It is easy to prove according to Proposition \ref{jlp}.
\end{proof}

\begin{thm}\label{thm4}
Let $(U, A, V, f)$ be an interval-valued information system, $B \subseteq A$ and $R_{B}$ be a reflexive relation induced by $B$. Then $CD_{R_{B}}=1$ if and only if $\forall x\in U$ and $y,z\in R_{B}(x)$, $\forall a\in B$, $f(y,a) = f(z,a)$.
\end{thm}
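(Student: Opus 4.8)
The plan is to reduce the statement about the whole relation $R_{B}$ to the per-neighbourhood statement already settled in Theorem~\ref{tej}, using the averaging structure of Definition~\ref{jlr} together with the bound $0\le CD_{R_{B}(x)}\le 1$ from Proposition~\ref{jlp}. This is the same strategy used to prove Theorem~\ref{thm2} from Proposition~\ref{pr3}.

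For the necessity direction, I would start from $CD_{R_{B}}=1$. By Definition~\ref{jlr} this means $\sum_{x\in U}CD_{R_{B}(x)}=|U|$. Since by Proposition~\ref{jlp} each of the $|U|$ summands satisfies $CD_{R_{B}(x)}\le 1$, a sum of $|U|$ such terms can equal $|U|$ only if every term equals $1$; hence $CD_{R_{B}(x)}=1$ for all $x\in U$. Now apply Theorem~\ref{tej} to each $x$: $CD_{R_{B}(x)}=1$ gives that $\forall y,z\in R_{B}(x)$ and $\forall a\in B$, $f(y,a)=f(z,a)$, which is exactly the desired conclusion.

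For the sufficiency direction, suppose that for every $x\in U$ we have $f(y,a)=f(z,a)$ for all $y,z\in R_{B}(x)$ and all $a\in B$. Then Theorem~\ref{tej} immediately yields $CD_{R_{B}(x)}=1$ for each $x\in U$, and substituting into Definition~\ref{jlr} gives $CD_{R_{B}}=\frac{1}{|U|}\sum_{x\in U}1=1$. I do not anticipate any real obstacle here: the only step requiring a word of justification is the passage "sum of $|U|$ terms each $\le 1$ equalling $|U|$ forces each term to be $1$," and that is elementary. The substantive content has already been done in Theorem~\ref{tej}; this theorem is essentially its averaged form.
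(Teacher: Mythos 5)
Your argument is correct and is exactly the route the paper takes: the paper's proof is simply ``straightforward from Definition~\ref{jlr} and Theorem~\ref{tej},'' and your write-up fills in the same averaging step (each $CD_{R_{B}(x)}\le 1$ forces every summand to be $1$) together with an application of Theorem~\ref{tej} in both directions. No gaps; you have just made the paper's one-line proof explicit.
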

\begin{proof}
It is straightforward from Definition \ref{jlr} and Theorem \ref{tej}.
\end{proof}

Theorem \ref{thm4} provides the condition that the cluster degree reaches the maximum 1. According to Theorem \ref{thm4}, we can obtain the following result.

The connection between $CD_{R_{B}}=1$ and equivalence relations is unclear. This is because in Theorem \ref{thm4}, we only know that $R_{B}$ is a reflexive relation induced by $B$, but the way that $R_{B}$ is induced is unknown. If we know how to generate the relation $R_{B}$, then the following result will become clear.
\begin{cor}\label{cor1}
Let $(U, A, V, f)$ be an interval-valued information system, $B \subseteq A$ and $\lambda\in[0,1]$. Then $CD_{RF_{B} ^{\lambda}}=1$ (or $CD_{RS_{B} ^{\lambda}}=1$, or $CD_{RT_{B} ^{\lambda}}=1$) if and only if $RF_{B} ^{\lambda}$ (or $RS_{B} ^{\lambda}$, or $RT_{B} ^{\lambda}$) is an equivalence relation.
\end{cor}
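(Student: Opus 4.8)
The plan is to deduce Corollary \ref{cor1} from Theorem \ref{thm4} by showing that, for the specific relations $RF_{B}^{\lambda}$, $RS_{B}^{\lambda}$ and $RT_{B}^{\lambda}$, the combinatorial condition ``$\forall x\in U$ and $y,z\in R_{B}(x)$, $\forall a\in B$, $f(y,a)=f(z,a)$'' is logically equivalent to ``$R_{B}^{\lambda}$ is an equivalence relation.'' Since the three relations are treated identically, I would phrase the argument once for a generic $R_{B}^{\lambda}\in\{RF_{B}^{\lambda},RS_{B}^{\lambda},RT_{B}^{\lambda}\}$, noting that each is reflexive and symmetric by construction (from Eqs. (\ref{fr}), (\ref{sr}), (\ref{tr}) and the symmetry of the similarity degrees recorded in Remark \ref{rm}), so the whole question reduces to transitivity. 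The key observation is that each similarity degree $S$ satisfies property (b) of Remark \ref{rm}: $S(u,v)=1$ iff $u=v$; consequently $f(y,a)=f(z,a)$ for all $a\in B$ exactly when $(y,z)\in R_{B}^{1}$, i.e.\ when $y$ and $z$ are ``indiscernible'' with respect to $B$.

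For the forward direction, assume the relation $R_{B}^{\lambda}$ is an equivalence relation. Then for any $x$ and any $y,z\in R_{B}^{\lambda}(x)$ we have $(x,y)\in R_{B}^{\lambda}$ and $(x,z)\in R_{B}^{\lambda}$; by symmetry $(y,x)\in R_{B}^{\lambda}$, and by transitivity $(y,z)\in R_{B}^{\lambda}$. I would then need the sharper fact that $(y,z)$ actually lies in $R_{B}^{1}$, not merely in $R_{B}^{\lambda}$. This is where the real content sits: I would argue that if $R_{B}^{\lambda}$ is transitive then it must coincide with $R_{B}^{1}$ on each block, because the $\lambda$-level sets of the three similarity degrees cannot be transitive unless they degenerate to equality within a class. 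Concretely, $R_{B}^{\lambda}(x)$ is a clique in $R_{B}^{\lambda}$; within this clique every pair is $\lambda$-similar on every attribute; and one shows that a set of interval values that is pairwise $\lambda$-similar (under $SF$, $SS$, or $ST$) and closed under the relation forces all the values to be equal — otherwise picking a ``widest-spread'' pair inside the clique and a third point would violate one of the similarity inequalities or violate transitivity. Having established $f(y,a)=f(z,a)$ for all $a\in B$, the forward direction is done, and then Theorem \ref{thm4} gives $CD_{R_{B}^{\lambda}}=1$.

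For the converse, suppose $CD_{R_{B}^{\lambda}}=1$. By Theorem \ref{thm4}, $\forall x\in U$ and $y,z\in R_{B}^{\lambda}(x)$, $\forall a\in B$, $f(y,a)=f(z,a)$. Reflexivity and symmetry of $R_{B}^{\lambda}$ are already known, so it remains to check transitivity: if $(x,y)\in R_{B}^{\lambda}$ and $(y,z)\in R_{B}^{\lambda}$, I want $(x,z)\in R_{B}^{\lambda}$. Here $x\in R_{B}^{\lambda}(x)$ and $y\in R_{B}^{\lambda}(x)$, so the Theorem \ref{thm4} condition applied with the class $R_{B}^{\lambda}(x)$ gives $f(x,a)=f(y,a)$ for all $a\in B$; similarly, since $y,z\in R_{B}^{\lambda}(y)$, we get $f(y,a)=f(z,a)$, hence $f(x,a)=f(z,a)$ for all $a\in B$. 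By property (b) in Remark \ref{rm}, $S(f(x,a),f(z,a))=1\geq\lambda$ for every $a\in B$, so $(x,z)\in R_{B}^{\lambda}$. Thus $R_{B}^{\lambda}$ is transitive, and being also reflexive and symmetric, it is an equivalence relation. Invoking Corollary \ref{thm2c} as a cross-check (an equivalence relation is exactly a transitive one here) closes the argument.

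The main obstacle is the forward direction, specifically the claim that transitivity of a $\lambda$-level similarity relation forces exact equality of attribute values inside each class. The cleanest route I foresee is: take a class $C=R_{B}^{\lambda}(x)$, fix an attribute $a$, let $p,q\in C$ attain the extreme endpoint values among $\{f(y,a):y\in C\}$ (say $p$ minimizes $f(\cdot,a)^{-}$ and $q$ maximizes $f(\cdot,a)^{+}$); because $C$ is a clique, $(p,q)\in R_{B}^{\lambda}$, so $S(f(p,a),f(q,a))\geq\lambda$. Then use the explicit formulas (\ref{rf})--(\ref{rt}) together with the transitivity closure behavior to show any slack ($\lambda<1$) would let one build a longer chain inside the class that escapes the $\lambda$-level set, contradicting the clique property — forcing $f(p,a)=f(q,a)$ and hence constancy on $C$. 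If nailing down this extremal argument for all three degrees simultaneously proves delicate, the fallback is to observe directly that $RF_{B}^{\lambda}\subseteq RT_{B}^{\lambda}\subseteq RS_{B}^{\lambda}$ (Theorem \ref{thm1}) and handle $SS$ as the representative ``largest'' case, transferring to the others. Everything else is routine bookkeeping with Definitions \ref{jl}, \ref{jlr} and Theorem \ref{thm4}.
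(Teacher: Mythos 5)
Your converse direction ($CD_{R_{B}^{\lambda}}=1$ implies equivalence) is correct, and it is essentially all the content behind the paper's one-line appeal to Theorem \ref{thm4} and Eqs. (\ref{fr})--(\ref{tr}): the condition of Theorem \ref{thm4} together with reflexivity gives $f(x,a)=f(y,a)$ and $f(y,a)=f(z,a)$ for all $a\in B$ whenever $(x,y),(y,z)\in R_{B}^{\lambda}$, whence $S(f(x,a),f(z,a))=1\geq\lambda$ by Remark \ref{rm}(b) and transitivity follows.

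The forward direction is where your proposal has a genuine gap, and you correctly sensed that ``the real content sits'' there --- but the extremal/clique argument you sketch cannot be completed, because that implication is false. Take $U=\{x_{1},x_{2}\}$, $B=\{a\}$, $f(x_{1},a)=[0,1]$, $f(x_{2},a)=[0,0.9]$ and $\lambda=0.5$. Then $SF(f(x_{1},a),f(x_{2},a))=0.9\geq\lambda$, so $RF_{B}^{\lambda}=U\times U$ is the universal relation, which is reflexive, symmetric and transitive, hence an equivalence relation; yet $f(x_{1},a)\neq f(x_{2},a)$, so Theorem \ref{tej} (equivalently Theorem \ref{thm4}) forces $CD_{RF_{B}^{\lambda}}<1$. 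Since $RF_{B}^{\lambda}\subseteq RT_{B}^{\lambda}\subseteq RS_{B}^{\lambda}$, the same data refute the claim for the other two relations. The flaw in your sketch is the assertion that a pairwise $\lambda$-similar clique that is closed under the relation must degenerate to equal values: when the clique is an entire connected component (here, the whole universe), there is no third point and no longer chain available to escape the $\lambda$-level set, transitivity holds vacuously, and the values may remain distinct. Note that the paper's own justification (``straightforward from Theorem \ref{thm4} and Eqs. (\ref{fr})--(\ref{tr})'') silently skips this direction as well; only the implication from $CD_{R_{B}^{\lambda}}=1$ to the equivalence-relation property is actually supported by the cited results.
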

\begin{proof}
It is straightforward from Theorem \ref{thm4} and Eqs. (\ref{fr})-(\ref{tr}).
\end{proof}

At last, we give an example to illustrate the cluster degree.
\begin{ex}\label{ex3}(Continuation of Example \ref{ex1}) In order to compute $CD_{RF_{B} ^{0.6}}$, we choose the similarity degree $SF$. By Definition \ref{jl}, it is easy to compute that
$$CD_{RF_{a_{1}}^{0.6}(x_{1})}= CD_{RF_{a_{2}}^{0.6}(x_{1})} = CD_{RF_{a_{3}}^{0.6}(x_{1})} = 1,$$ and thus $CD_{RF_{B}^{0.6}(x_{1})} = \frac{1}{3}(1 + 1 + 1) = 1.$ Similarity, we can obtain that $$CD_{RF_{B}^{0.6}(x_{2})} = CD_{RF_{B}^{0.6}(x_{3})} = CD_{RF_{B}^{0.6}(x_{4})} = CD_{RF_{B}^{0.6}(x_{5})} = 1.$$ It follows that $CD_{RF_{B}^{0.6}} = \frac{1}{5} (1 + 1 + 1 + 1 + 1) = 1$.

For $RS_{B} ^{0.6}$, we know that $RS_{B}^{0.6}(x_{1})= \{x_{1}, x_{2}, x_{3}, x_{4}\}$. According to Eq. (\ref{eqa2}), we have
\begin{align*}
&\overline{V}_{RS_{a_{1}}^{0.6}(x_{1})}^{-} = \frac{\sum_{x \in RS_{B} ^{0.6}(x_{1})} f(x, a_{1})^{-}}{|RS_{B} ^{0.6}(x_{1})|} = \frac{f(x_{1},a_{1})^{-} + f(x_{2},a_{1})^{-} + f(x_{3},a_{1})^{-} + f(x_{4},a_{1})^{-}}{4} = \frac{0 + 0.2 + 0.1 + 0}{4} = 0.075,\\
&\overline{V}_{RS_{a_{1}}^{0.6}(x_{1})}^{+} = \frac{\sum\limits_{x \in RS_{B} ^{0.6}(x_{1})} f(x_{1},a_{1})^{+}}{|RS_{B} ^{0.6}(x_{1})|} = \frac{f(x_{1},a_{1})^{+} + f(x_{2},a_{1})^{+} + f(x_{3}, a_{1})^{+} + f(x_{4},a_{1})^{+}}{4} = \frac{0.5 + 0.6 + 0.8 + 0.9}{4} = 0.7.
\end{align*}
Therefore, we have $\overline{V}_{RS_{a_{1}}^{0.6}(x_{1})} = [0.075, 0.7]$. In order to compute $CD_{RS_{B} ^{0.6}}$, we choose the similarity degree $SS$. Thus
\begin{equation*}
CD_{RS_{a_{1}}^{0.6}(x_{1})} = \frac{\sum_{x \in RS_{B} ^{0.6}(x_{1})}  SS(f(a_{1}, x), \overline{V_{RS_{B}^{0.6}(x_{1})}^{a_{1}}})}{|RS_{B} ^{0.6}(x_{1})|} = \frac{0.804 + 0.820 + 0.914 + 0.847}{4} = 0.846.
\end{equation*}
In the same way, we have $CD_{RS_{a_{2}}^{0.6}(x_{1})}=0.858$ and $CD_{RS_{a_{3}}^{0.6}(x_{1})} = 0.858$. Therefore,
$$CD_{RS_{B}^{0.6}(x_{1})}=\frac{1}{3} (0.846 + 0.858 + 0.858) = 0.854.$$
Similarly, we can obtain that $CD_{RS_{B}^{0.6}(x_{2})}= 0.849$, $CD_{RS_{B}^{0.6}(x_{3})}=0.711$, $CD_{RS_{B}^{0.6}(x_{4})}= 0.858$, and $CD_{RS_{B}^{0.6}(x_{5})}= 0.795$. It follows that $CD_{RS_{B} ^{0.6}}=\frac{1}{5} (0.854 + 0.849 + 0.711 + 0.858 + 0.795) = 0.813$.

Similarly, we can also compute the cluster degree of $RT_{B} ^{0.6}$, where we choose the similarity degree $ST$. We can obtain that $CD_{RT_{B} ^{0.6}}= 0.909$.

It is easy to see that $CD_{RS_{B}^{0.6}} \leq CD_{RT_{B}^{0.6}} \leq CD_{RF_{B}^{0.6}}$.
\end{ex}

In Example \ref{ex3}, we know that $CD_{RF_{B}^{0.6}}=1$. By Example \ref{ex1}, $RF_{B}^{0.6}$ is an equivalence relation. This coincides the result of Corollary \ref{cor1}.

\section{Comparison research on binary relations based on the transitive degree}\label{sec5}
In this section, we propose several methods to compare binary relations by means of the transitive degree. The purpose of these methods is to explore the difference between binary relations. Specially, when we obtain several binary relations from the same information system, which should relations be chosen to be used analyze this information system? In this section, we give reference for this problem.

In this section, we use the Face Recognition Dataset \cite{11,14} (See Table \ref{tb2}) to illustrate our methods. In Table \ref{tb2}, each interval value represents a measurement result from a face. This dataset consists of the $27$ observations and $6$ attributes $( i.e.~U = \{FRA1, FARA2, \cdot \cdot \cdot, ROM3\}, A = \{AD, BC, \cdot \cdot \cdot, GH\} )$. The six attributes consist of the length spanned by the eyes, the length between the eyes and etc..

\begin{table}[htbp]
\centering
\caption{\label{tb2} Face Recognition Dataset}
\small
\begin{tabular}{ccccccc}
\hline
subject & $AD$ & $BC$ & $AH$ & $DH$ & $EH$ & $GH$\\
\hline
FRA1 & $ [155.00,157.00]$ & $ [58.00,61.01]$ & $ [100.45,103.28]$ & $ [105.00,107.30]$ & $ [61.40,65.73]$ &$ [64.20,67.80]$\\
$FRA2$ & $ [154.00,160.01]$ & $ [57.00,64.00]$ & $ [101.98,105.55]$ & $ [104.35,107.30]$ & $ [60.88,63.03]$ & $[62.94,66.47]$\\
$FRA3$ & $[154.01,161.00]$ & $ [57.00,63.00]$ & $ [99.36,105.65]$ & $ [101.04,109.04]$ & $[60.95,65.60]$ & $ [60.42,66.40]$\\
$HUS1$ & $[168.86,172.84]$ & $[58.55,63.39]$ & $[102.83,106.53]$ & $[122.38,124.52]$ & $[56.73,61.07]$ & $ [60.44,64.54]$\\
$HUS2$ & $[169.85,175.03]$ & $[60.21,64.38]$ & $[102.94,108.71]$ & $[120.24,124.52]$ & $[56.73,62.37]$ & $ [60.44,66.84]$\\
$HUS3$ & $[168.76,175.15]$ & $[61.40,63.51]$ & $[104.35,107.45]$ & $[120.93,125.18]$ & $[57.20,61.72]$ & $ [58.14,67.08]$\\
$INC1$ & $[155.26,160.45]$ & $[53.15,60.21]$ & $[95.88,98.49]$ & $[91.68,94.37]$ & $[62.48,66.22]$ & $ [58.90,63.13]$\\
$INC2$ & $[156.26,161.31]$ & $[51.09,60.07]$ & $[95.77,99.36]$ & $[91.21,96.83]$ & $[54.92,64.20]$ & $[54.41,61.55]$\\
$INC3$ & $[154.47,160.31]$ & $[55.08,59.03]$ & $[93.54,98.98]$ & $[90.43,96.43]$ & $[59.03,65.86] $~&$[55.97,65.80]$\\
$ISA1$ & $[164.00,168.00]$ & $[55.01,60.03]$ & $[120.28,123.04]$ & $[117.52,121.02]$ & $[54.38,57.45]$ & $[50.80,53.25]$\\
$ISA2$ & $[163.00,170.00]$ & $[54.04,59.00]$ & $[118.80,123.04]$ & $[116.67,120.24]$ & $[55.47,58.67]$ & $[52.43,55.23]$\\
$ISA3$ & $[164.01,169.01]$ & $[55.00,59.01]$ & $ [117.38,123.11]$ & $[116.67,122.43]$ & $[52.80,58.31]$ & $[52.20,55.47]$\\
$JPL1$ & $[167.11,171.19]$ & $[61.03,65.01]$ & $[118.23,121.82]$ & $[108.30,111.20]$ & $[63.89,67.88]$ & $[57.28,60.83]$\\
$JPL2$ & $[169.14,173.18]$ & $[60.07,65.07]$ & $[118.85,120.88]$ & $[108.98,113.17]$ & $[62.63,69.07]$ & $[57.38,61.62]$\\
$JPL3$ & $[169.03,170.11]$ & $[59.01,65.01]$ & $[115.88,121.38]$ & $[110.34,112.49]$ & $[61.72,68.25]$ & $[59.46,62.94]$\\
$KHA1$ & $[149.34,155.54]$ & $[54.15,59.14]$ & $[111.95,115.75]$ & $[105.36,111.07]$ & $[54.20,58.14]$ & $[48.27,50.61]$\\
$KHA2$ & $[149.34,155.32]$ & $[52.04,58.22]$ & $[111.20,113.22]$ & $[105.36,111.07]$ & $[53.71,58.14]$ & $[49.41,52.80]$\\
$KHA3$ & $[150.33,157.26]$ & $[52.09,60.21]$ & $[109.04,112.70]$ & $[104.74,111.07]$ & $[55.47,60.03]$ & $[49.20,53.41]$\\
$LOT1$ & $[152.64,157.62]$ & $[51.35,56.22]$ & $[116.73,119.67]$ & $[114.62,117.41]$ & $[55.44,59.55]$ & $[53.01,56.60]$\\
$LOT2$ & $[154.64,157.62]$ & $[52.24,56.32]$ & $[117.52,119.67]$ & $[114.28,117.41]$ & $[57.63,60.61]$ & $[54.41,57.98]$\\
$LOT3$ & $[154.83,157.81]$ & $[50.36,55.23]$ & $[117.59,119.75]$ & $[114.04,116.83]$ & $[56.64,61.07]$ & $[55.23,57.80]$\\
$PHI1$ & $[163.08,167.07]$ & $[66.03,68.07]$ & $[115.26,119.60]$ & $[116.10,121.02]$ & $[60.96,65.30]$ & $[57.01,59.82]$\\
$PHI2$ & $[164.00,168.03]$ & $[65.03,68.12]$ & $[114.55,119.60]$ & $[115.26,120.97]$ & $[60.96,67.27]$ & $[55.32,61.52]$\\
$PHI3$ & $[161.01,167.00]$ & $[64.07,69.01]$ & $[116.67,118.79]$ & $[114.59,118.83]$ & $[61.52,68.68]$ & $[56.57,60.11]$\\
$ROM1$ & $[167.15,171.24]$ & $[64.07,68.07]$ & $[123.75,126.59]$ & $[122.92,126.37]$ & $[51.22,54.64]$ & $[49.65,53.71]$\\
$ROM2$ & $[168.15,172.14]$ & $[63.13,68.07]$ & $[122.33,127.29]$ & $[124.08,127.14]$ & $[50.22,57.14]$ & $[49.93,56.94]$\\
$ROM3$ & $[167.11,171.19]$ & $[63.13,68.03]$ & $[121.62,126.57]$ & $[122.58,127.78]$ & $[49.41,57.28]$ & $[50.99,60.46]$\\
\hline
\end{tabular}
\end{table}
\subsection{Comparison research on the relations from the viewpoint of the transitive degree}\label{sec51}
In Section \ref{sec23}, we know that one can establish binary relations, $RF_{B}^{\lambda}$, $RS_{B}^{\lambda}$ and $RT_{B}^{\lambda}$, from the same information system. In this section, we will analyze the difference among these binary relations from the viewpoint of the transitive degree. Firstly, we give an example in which we discuss the changes of transitive degrees of these binary relations with varying attribute subsets. Then based on the changes, we observe the difference among $RF_{B}^{\lambda}$, $RS_{B}^{\lambda}$ and $RT_{B}^{\lambda}$ so as to provide some scales for choosing relations in dealing with interval-valued information systems.

\begin{ex}\label{ex4}
In Table \ref{tb2}, we choose the following attribute subsets:
\begin{align*}
&B_{1} = \{AD\}, ~B_{2}=\{AD, BC\}, ~B_{3}=\{AD, BC, AH\}, ~B_{4}=\{AD, BC, AH, DH\}, \\
&~B_{5}=\{AD, BC, AH, DH, EH\} \text{ and}~B_{6}=\{AD, BC, AH, DH, EH, GH\}.
\end{align*}
Take $\lambda = 0.6$. By Eqs. (\ref{rf})-(\ref{rt}), we can obtain
$RF_{B_{1}}^{\lambda},~ RF_{B_{2}}^{\lambda}, \cdots, ~RF_{B_{6}}^{\lambda},$
$RS_{B_{1}}^{\lambda},~ RS_{B_{2}}^{\lambda}, \cdots, ~RS_{B_{6}}^{\lambda},$
$RT_{B_{1}}^{\lambda},~ RT_{B_{2}}^{\lambda}$, $\cdots, ~RT_{B_{6}}^{\lambda}$.
Next, by Definition \ref{trb}, we can compute their transitive degrees which are shown in Table \ref{tb3} and Figure \ref{fig1}. Table \ref{tb3} and Figure \ref{fig1} reflect the following facts:
\begin{itemize}
\item[$\cdot$]
Clearly, $B_{1}\subseteq B_{2}\subseteq\cdots\subseteq B_{6}$. However, $TD_{RS_{B_{1}}^{\lambda}}\leq TD_{RS_{B_{2}}^{\lambda}}\leq TD_{RS_{B_{3}}^{\lambda}}\leq TD_{RS_{B_{4}}^{\lambda}} \leq TD_{RS_{B_{5}}^{\lambda}}> TD_{RS_{B_{6}}^{\lambda}}$. That is to say, transitive degrees do not possess monotonicity.
\item[$\cdot$]
According to Theorem \ref{thm1}, we know that $RF_{B} ^{\lambda} \subseteq RT_{B} ^{\lambda}\subseteq RS_{B} ^{\lambda}$. By Table \ref{tb3} and Figure \ref{fig1}, we can see that $TD_{RF_{B_{2}} ^{\lambda}} > TD_{RT_{B_{2}} ^{\lambda}}> TD_{RS_{B_{2}} ^{\lambda}}$ and $TD_{RT_{B_{4}} ^{\lambda}} > TD_{RF_{B_{4}} ^{\lambda}}> TD_{RS_{B_{4}} ^{\lambda}}$. This illustrates that transitive degrees do not have the property of rank preservation for binary relations $RF_{B} ^{\lambda}$, $RT_{B} ^{\lambda}$ and $RS_{B}^{\lambda}$, and that the transitive degree is different from the coarse degree. In fact, we know that the transitive degree of a binary relation is not related to its coarse degree. Therefore the transitive degree that does not satisfy rank preservation is reasonable.
\item[$\cdot$]
From Figure \ref{fig1}, we can see that the transitive degree of $RF_{B}^{\lambda}$ is largest and changes smoothly. Thus $RF_{B}^{\lambda}$ is a good choice to address interval-valued information systems from the viewpoint of the transitive degree.
\end{itemize}

\begin{table}[htbp]
\centering
\caption{\label{tb3} Transitive degrees for different attribute subsets}
\begin{tabular}{ccccccc}
\hline
Transitive degrees & $B_{1}$ & $B_{2}$ & $B_{3}$ & $B_{4}$ & $B_{5}$ & $B_{6}$\\
\hline
$TD_{RF_{B}^{0.6}}$ & $0.677$ & $0.951$ & $0.951$ & $0.951$ & $1.000$ & $1.000$\\
$TD_{RS_{B}^{0.6}}$ & $0.620$ & $0.673$ & $0.673$ & $0.926$ & $0.975$ & $0.951$\\
$TD_{RT_{B}^{0.6}}$ & $0.623$ & $0.858$ & $0.858$ & $0.975$ & $0.877$ & $0.926$\\
\hline
\end{tabular}
\end{table}

\begin{figure}[ht]
\centering
\includegraphics[scale=0.6]{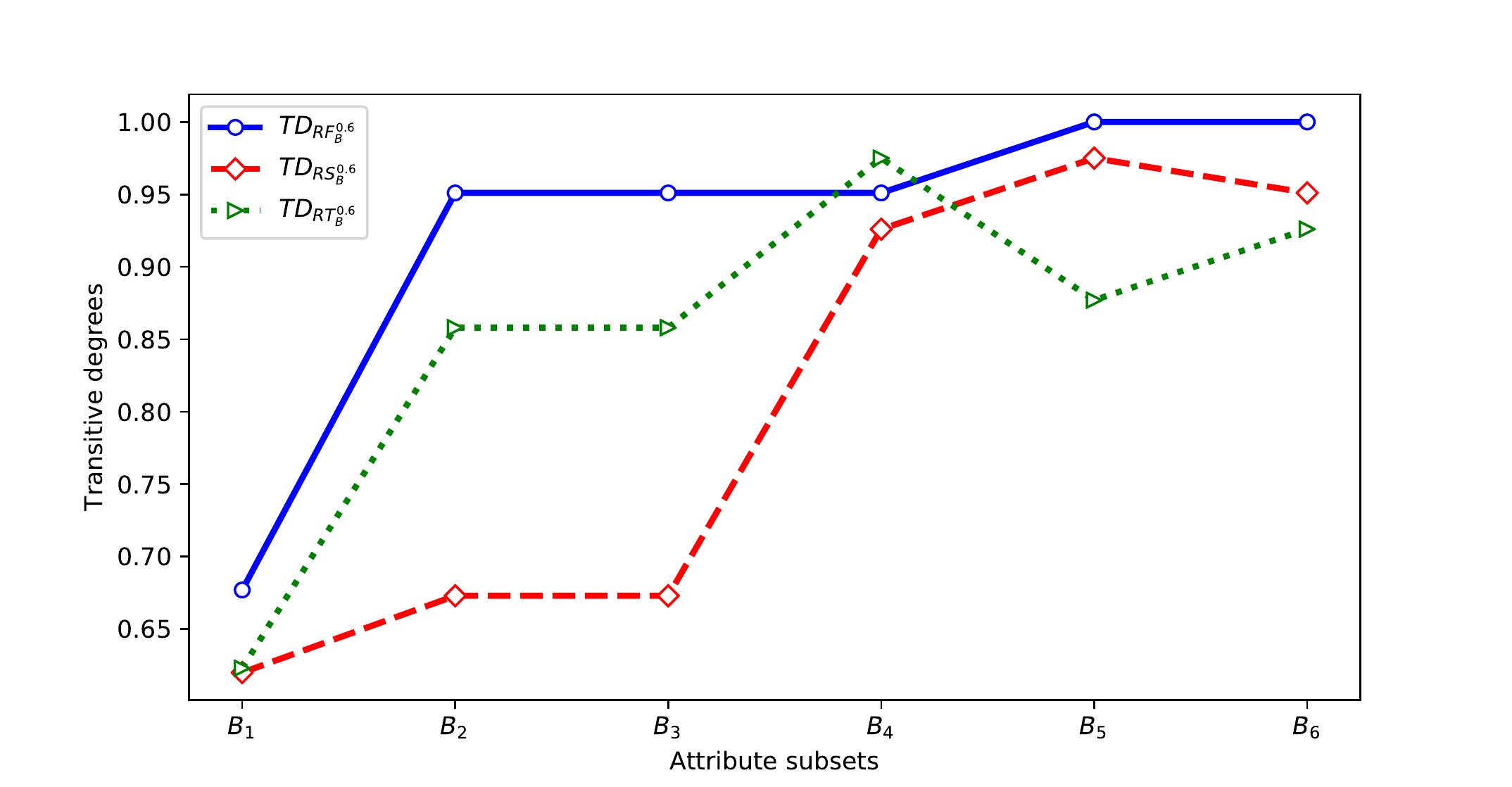}
\caption{The changes of transitive degrees for different attribute subsets}
\label{fig1}
\end{figure}
\end{ex}

In the following example, we discuss the changes of transitive degrees of $RF_{B}^{\lambda}$, $RS_{B}^{\lambda}$ and $RT_{B}^{\lambda}$ with varying the values of parameter.

\begin{ex}\label{ex5} In this example, we analyze the effects of parameters on transitive degree. In Table \ref{tb2}, we choose the attribute subset $B=\{AD, BC, AH, DH, EH, GH\}$. And we select $\lambda_{1}=0.1$, $\lambda_{2}=0.2$, ..., and $\lambda_{9} = 0.9$.

By Eqs. (\ref{rf}), (\ref{rs}) and (\ref{rt}), we can obtain
$RF_{B}^{\lambda_{1}},~ RF_{B}^{\lambda_{2}}, \cdots, ~RF_{B}^{\lambda_{6}},$
$RS_{B}^{\lambda_{1}},~ RS_{B}^{\lambda_{2}}, \cdots, ~RS_{B}^{\lambda_{6}},$
$RT_{B}^{\lambda_{1}},~ RT_{B}^{\lambda_{2}}$, $\cdots, ~RT_{B}^{\lambda_{6}}$.
Then, by Definition \ref{trb}, we can compute their transitive degrees, and we show them in Table \ref{tb4} and Figure \ref{fg2}. Table \ref{tb4} and Figure \ref{fg2} reflect the following facts:
\begin{itemize}
\item[$\cdot$]
From Figure \ref{fg2}, we can see that the transitive degrees of binary relations $RF_{B}^{\lambda}$ and $RT_{B}^{\lambda}$ change smoothly. Conversely, the transitive degree of $RS_{B}^{\lambda}$ is sensitive to the parameter $\lambda$. Thus $RF_{B}^{\lambda}$ and  $RT_{B}^{\lambda}$ are good choices to deal with interval-valued information systems from the viewpoint of transitive degree.
\end{itemize}

\begin{table}[htbp]
\centering
\caption{\label{tb4} Transitive degrees for different values of parameter}
\begin{tabular}{cccccccccc}
\hline
& $\lambda_{1}$ & $\lambda_{2}$ & $\lambda_{3}$ & $\lambda_{4}$ & $\lambda_{5}$ & $\lambda_{6}$ & $\lambda_{7}$ & $\lambda_{8}$ & $\lambda_{9}$\\
\hline
$T_{RF_{B}^{\lambda}}$ & $1.000$ & $0.951$ & $0.901$ & $0.951$ & $1.000$ & $1.000$ & $1.000$ & $1.000$ & $1.000$\\
$T_{RS_{B}^{\lambda}}$ & $0.992$ & $0.745$ & $0.358$ & $0.722$ & $0.926$ & $0.951$ & $0.951$ & $1.000$ & $1.000$\\
$T_{RT_{B}^{\lambda}}$ & $1.000$ & $1.000$ & $0.975$ & $0.975$ & $0.901$ & $0.926$ & $1.000$ & $1.000$ & $1.000$\\
\hline
\end{tabular}
\end{table}
\end{ex}

\begin{figure}[ht]
\centering
\includegraphics[scale=0.6]{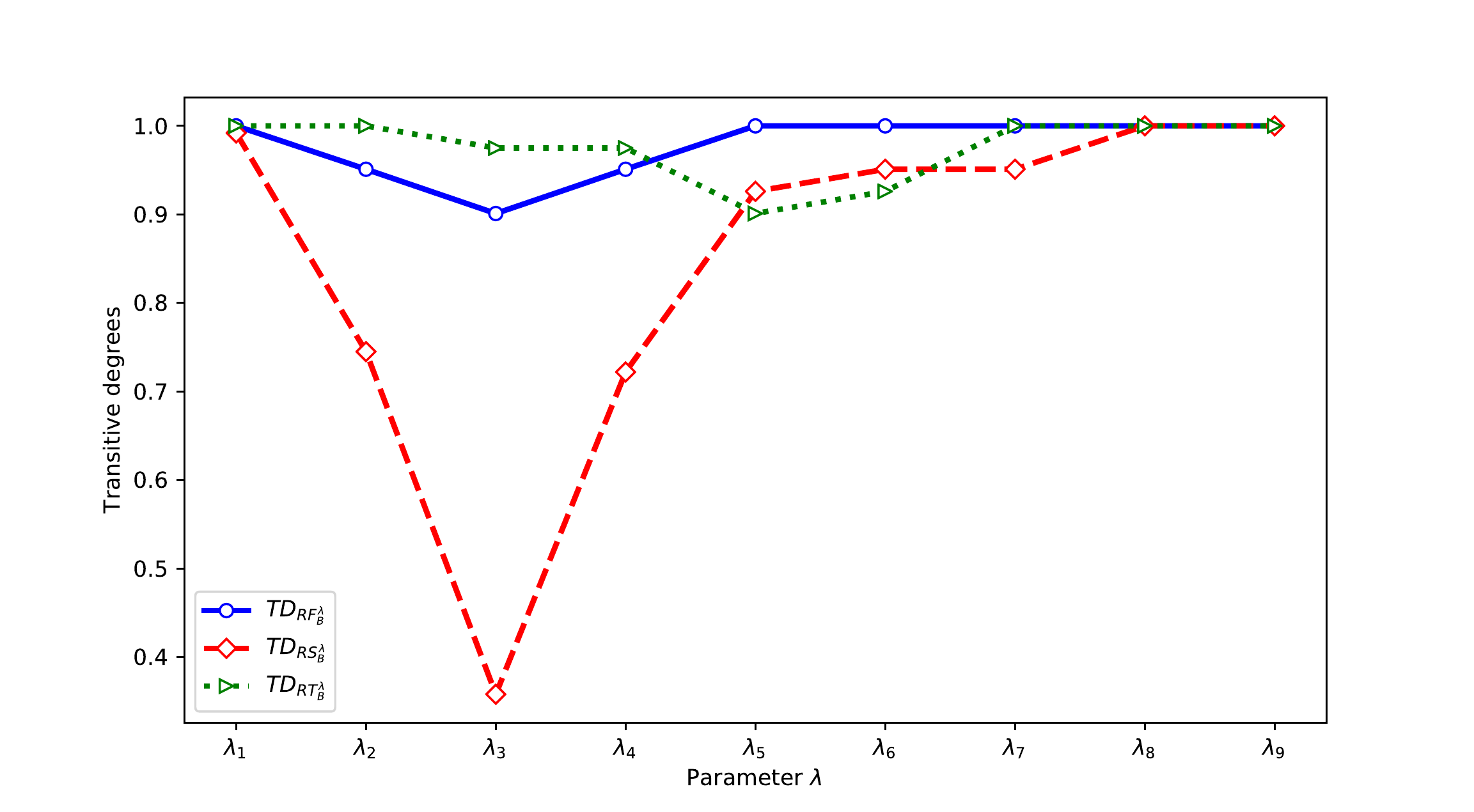}
\caption{The changes of transitive degrees for different values of parameter}
\label{fg2}
\end{figure}

\subsection{Comparison research on the relations from the viewpoint of the attribute reduction based on transitive degree}\label{sec52}
In this section, we want to demonstrate the effectiveness of the transitive degree, and analyze the rationality of $RF_{B} ^{\lambda}$, $RS_{B} ^{\lambda}$ and $RT_{B} ^{\lambda}$ from the perspective of attribute reduction.

By means of the transitive degree, we first give the definition of attribute reduction.
\begin{df}\label{trr}
Let $(U, A, V, f)$ be an interval-valued information system and $B \subseteq A$. $B$ is said a reduction of $A$ if $B$ satisfies the following conditions:

(1) $TD_{R_{B}} = TD_{R_{A}}$;

(2) $\forall a \in B$, $TD_{R_{B - \{a\}}} \neq TD_{R_{B }}$.
\end{df}

Significance measure of an attribute plays an important role in attribute reduction. It is generally defined by conditional entropy \cite{46}, dependency function, prior weighted entropy and posterior weighted entropy \cite{47}. In this section, we can define significance measure of an attribute by the transitive degree.

\begin{df}\label{sg1}
Let $(U, A, V, f)$ be an interval-valued information system and $B \subseteq A$. For $a \in A$, if $a\in B$, then the significance measure of the attribute $a$ with respect to $B$ is defined as follows:
\begin{equation}\label{sgm}
tsig^{inner} _{_{R_{B}}}(a) = |TD_{R_{B}} - TD_{R_{B - \{a\}}}|.
\end{equation}
If $a\notin B$, then the significance measure of the attribute $a$ with respect to $B$ is defined as follows:
\begin{equation}\label{sgm1}
tsig^{outer}_{_{R_{B}}}(a)=|TD_{R_{B \cup \{a\}}} - TD_{R_{B}}|.
\end{equation}
In particular, if $tsig^{inner}_{_{R_{B}}}(a)= 0$ holds, then $a$ is said reducible in $B$, and we can say that $a$ is a dispensable attribute.
\end{df}

According to Definitions \ref{trr} and \ref{sg1}, we can provide a heuristic algorithm of reduction proposed by Definition \ref{trr} as follows:

\text{\bf TDR Algorithm:}

$step~1$: $B \leftarrow \emptyset$.

$step~2$: For each $a \in A-B$, calculate each $tsig^{outer} _{_{R_{B}}}(a)$.

$step~3$: Choose $tsig^{outer} _{_{R_{B}}}(a^{\prime}) = \max \{tsig^{outer} _{_{R_{B}}}(a) \mid a \in A - B\}$ and $B \leftarrow B \cup \{a^{\prime}\}$.

$step~4$: If $TD_{R_{B}} = TD_{_{R_{A}}}$, then go to $step~5$. Otherwise, go to $step~2$;

$step~5$: For each $a \in B$, calculate $tsig^{inner} _{_{R_{B}}}(a)$. If $tsig^{inner}_{_{R_{B}}}(a) = 0$, then $B \leftarrow B - \{a\}$.

$step~6$: Output $B$. Now, $B$ is a reduction of $A$ by Definition \ref{trr}.

According to TDR Algorithm, we can give the following example so as to analyze the binary relations $RF_{B} ^{\lambda}$, $RS_{B} ^{\lambda}$ and $RT_{B} ^{\lambda}$.

\begin{ex}\label{ex6}
In this example, we consider the Face Recognition Dataset in Table \ref{tb2}. For convenience, we will denote the Face Recognition Dataset as $(U, A)$, and write $a_{1} = AD$, $a_{2} = BC$, $a_{3} = AH$, $a_{4} = DH$, $a_{5} = EH$ and $a_{6} = GH$. Take $\lambda = 0.6$. By Eqs. (\ref{fr})-(\ref{tr}), we can obtain three binary relations: $RF_{A}^{0.6}$, $RS_{A}^{0.6}$ and $RT_{A}^{0.6}$. By means of TDR Algorithm, we can acquire three group reductions corresponding to $RF_{A}^{0.6}$, $RS_{A}^{0.6}$ and $RT_{A}^{0.6}$. The results are shown in Table \ref{tb6}.
\end{ex}

\begin{table}[htbp]
\centering
\caption{\label{tb6} Reductions based on the transitive degree}
\begin{tabular}{ccc}
\hline
Binary relations &Reductions\\
\hline
$RF_{A}^{0.6}$ & $\{\{a_{1}, a_{6}\}, \{a_{2}, a_{6}\}, \{a_{3}, a_{6}\}, \{a_{4}, a_{6}\}\}$\\
$RS_{A}^{0.6}$ & $\{\{a_{4}, a_{5}, a_{6}\}\}$\\
$RT_{A}^{0.6}$ & $\{\{a_{2}, a_{4}\}, \{a_{3}, a_{4}\}\}$\\
\hline
\end{tabular}
\end{table}

By Table \ref{tb6}, we can view that the reductions based on $RF_{B}^{\lambda}$ are similar to that based on $RS_{B}^{\lambda}$, while the reductions based on $RT_{B}^{\lambda}$ have obvious difference with the former two cases. This illustrates that $RF_{B}^{\lambda}$ and $RS_{B}^{\lambda}$ have universality. Hence $RF_{B}^{\lambda}$ and $RS_{B}^{\lambda}$ are good choice to investigate interval-valued information systems from the view point of attribute reduction. In addition, In Figure \ref{fig1}, we can see that the relations $RF_{B}^{\lambda}$ and $RT_{B}^{\lambda}$ are good choice from the viewpoint of the transitive degree. In summary, $RF_{B}^{\lambda}$ is more suitable than the others when we use rough set approach to investigate the interval-valued information systems.

\section{Comparison research on binary relations based on the cluster degree}
In this section, we propose several methods to compare binary relations by means of the cluster degree. The purpose of these methods is to explore the classification ability of binary relations. We want to give reference for evaluating binary relations from the viewpoint of clustering.

\subsection{Comparison research on the relations from the viewpoint of the cluster degree}
In this paper, we know that three binary relations, $RF_{B} ^{\lambda}$, $RS_{B} ^{\lambda}$ and $RT_{B} ^{\lambda}$, can be established from the same information system. In this section, we will analyze the classification ability of these binary relations from the viewpoint of cluster degree.

In the following example, we discuss the changes of cluster degrees of these binary relations with varying attribute subsets. Then we observe the difference among $RF_{B}^{\lambda}$, $RS_{B}^{\lambda}$ and $RT_{B}^{\lambda}$.

\begin{ex}\label{ex8}
In Table \ref{tb2}, we choose the following attribute subsets:
\begin{align*}
&B_{1} = \{AD\}, ~B_{2}=\{AD, BC\}, ~B_{3}=\{AD, BC, AH\}, ~B_{4}=\{AD, BC, AH, DH\}, \\
&~B_{5}=\{AD, BC, AH, DH, EH\} \text{ and} ~B_{6}=\{AD, BC, AH, DH, EH, GH\}.
\end{align*}
Take $\lambda = 0.6$, by Eqs. (\ref{rf}), (\ref{rs}) and (\ref{rt}), we can obtain
$RF_{B_{1}}^{\lambda},~ RF_{B_{2}}^{\lambda}, \cdots, ~RF_{B_{6}}^{\lambda},$
$RS_{B_{1}}^{\lambda},~ RS_{B_{2}}^{\lambda}, \cdots, ~RS_{B_{6}}^{\lambda},$
$RT_{B_{1}}^{\lambda},~ RT_{B_{2}}^{\lambda}$, $\cdots, ~RT_{B_{6}}^{\lambda}$.
Next, by Definition \ref{jlr}, we can compute their cluster degrees, and we show them in Table \ref{tb8} and Figure \ref{fig2}. Table \ref{tb8} and Figure \ref{fig2} reflect the following facts:
\begin{itemize}
\item[$\cdot$]
From Figure \ref{fig2}, the cluster degree $CD_{RF_{B} ^{0.6}}$ is the biggest one. This illustrates that $RF_{B}^{0.6}$ has the best classification ability.
\item[$\cdot$]
From Figure \ref{fig2}, we can see that $CD_{RF_{B} ^{0.6}}$ is more sensitive than that of $CD_{RS_{B} ^{0.6}}$ and $CD_{RT_{B} ^{0.6}}$ about attribute subset changes. Thus we can use $CD_{RF_{B} ^{0.6}}$ to measure the significance of an attribute and to make attribute reduction.
\item[$\cdot$]
In terms of stability, the $CD_{RT_{B} ^{0.6}}$ is the most stable with attribute subset changes. That is to say, the classification ability of $RT_{B}^{0.6}$ has fewer impacts about attribute subset changes.
\end{itemize}
\end{ex}

In the following example, we discuss the changes of cluster degrees of $RF_{B}^{\lambda}$, $RS_{B}^{\lambda}$ and $RT_{B}^{\lambda}$ with varying the values of parameter.

\begin{table}[htbp]
\centering
\caption{\label{tb8} Cluster degrees for different attribute subsets}
\begin{tabular}{ccccccc}
\hline
& $B_{1}$ & $B_{2}$ & $B_{3}$ & $B_{4}$ & $B_{5}$ & $B_{6}$\\
\hline
$CD_{RF_{B} ^{0.6}}$ & $0.824$ & $0.917$ & $0.920$ & $0.946$ & $0.988$ & $1.000$\\
$CD_{RS_{B} ^{0.6}}$ & $0.802$ & $0.824$ & $0.822$ & $0.870$ & $0.880$ & $0.873$\\
$CD_{RT_{B} ^{0.6}}$ & $0.863$ & $0.899$ & $0.903$ & $0.924$ & $0.922$ & $0.944$\\
\hline
\end{tabular}
\end{table}

\begin{figure}[ht]
\centering
\includegraphics[scale=0.6]{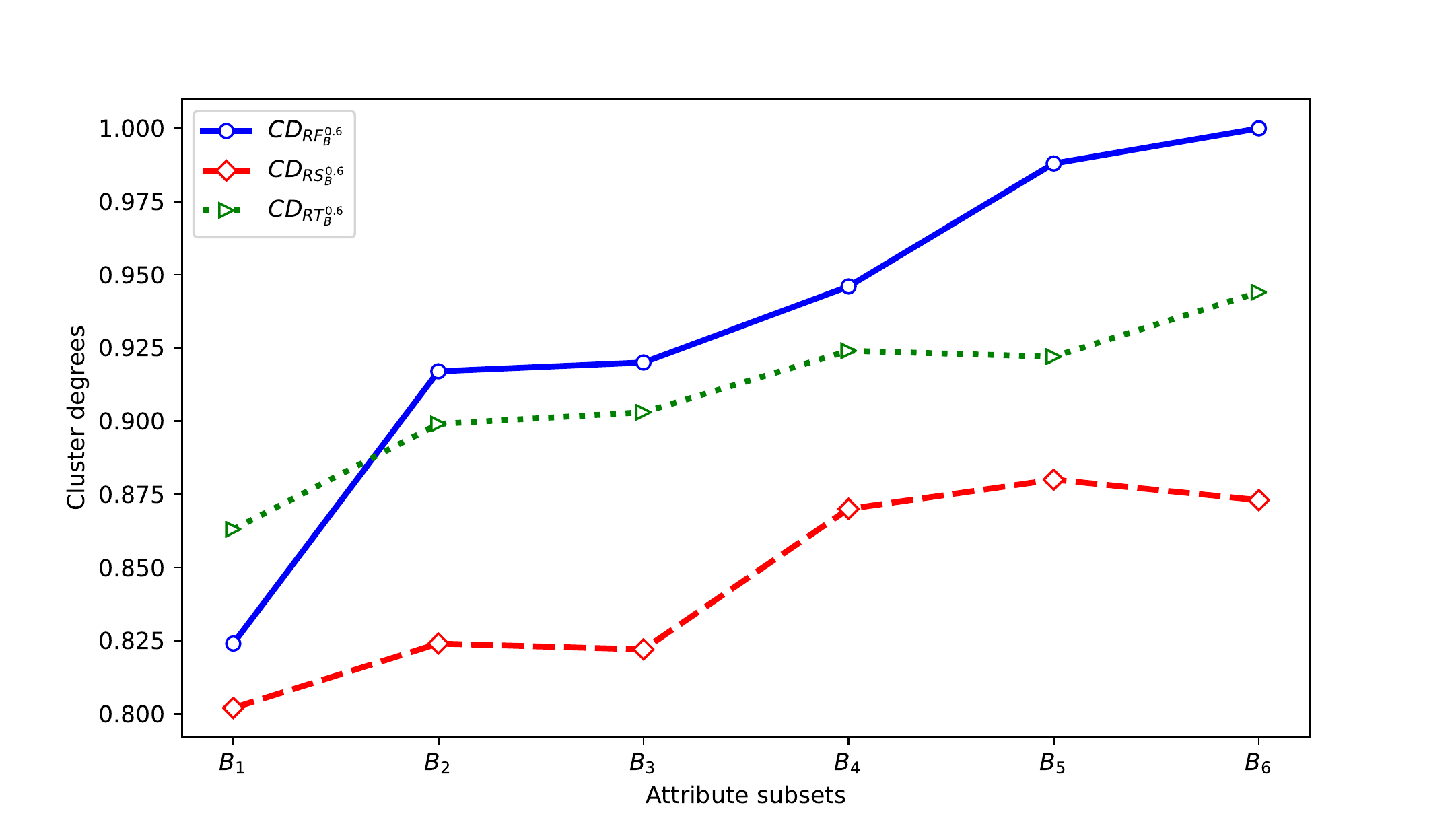}
\caption{The changes of cluster degrees for different attribute subsets}
\label{fig2}
\end{figure}

\begin{ex}\label{ex9}
In this example, we analyze the effects of parameters on the cluster degree. In Table \ref{tb2}, we choose the attribute subset $B=\{AD, BC, AH, DH, EH, GH\}$. And we select $\lambda_{1}=0.1$, $\lambda_{2}=0.2$, ..., and $\lambda_{9} = 0.9$.

By Eqs. (\ref{rf}), (\ref{rs}) and (\ref{rt}), we can obtain
$RF_{B}^{\lambda_{1}},~ RF_{B}^{\lambda_{2}}, \cdots, ~RF_{B}^{\lambda_{6}},$
$RS_{B}^{\lambda_{1}},~ RS_{B}^{\lambda_{2}}, \cdots, ~RS_{B}^{\lambda_{6}},$
$RT_{B}^{\lambda_{1}},~ RT_{B}^{\lambda_{2}}$, $\cdots, ~RT_{B}^{\lambda_{6}}$.
Then, by Definition \ref{jlr}, we can compute their cluster degrees, and we show them in Table \ref{tb9} and Figure \ref{fig3}. Table \ref{tb9} and Figure \ref{fig3} reflect the following facts:
\begin{itemize}
\item[$\cdot$]
As $\lambda$ becomes larger, $CD_{RF_{B} ^{\lambda}}$, $CD_{RS_{B} ^{\lambda}}$ and $CD_{RT_{B} ^{\lambda}}$ are monotonically increasing and reach the maximum value 1.
\item[$\cdot$]
For each $\lambda$, $CD_{RF_{B} ^{\lambda}}$ and $CD_{RT_{B} ^{\lambda}}$ are always more than $CD_{RS_{B} ^{\lambda}}$. That is to say, $CD_{RF_{B} ^{\lambda}}$ and $CD_{RT_{B} ^{\lambda}}$ possess better classification ability. Furthermore, $CD_{RF_{B} ^{\lambda}}$ is the first to achieve the maximum value 1. This illustrates that $CD_{RF_{B} ^{\lambda}}$ is more sensitive with $\lambda$ changes. Thus we can use $CD_{RF_{B} ^{\lambda}}$ to define the significance measure of an attribute so as to make attribute reduction.
\end{itemize}
\end{ex}

\begin{table}[htbp]
\centering
\caption{\label{tb9} Cluster degrees for different values of parameter}
\begin{tabular}{cccccccccc}
\hline
Cluster Degrees& $\lambda_{1}$ & $\lambda_{2}$ & $\lambda_{3}$ & $\lambda_{4}$ & $\lambda_{5}$ & $\lambda_{6}$ & $\lambda_{7}$ & $\lambda_{8}$ & $\lambda_{9}$\\
\hline
$CD_{RF_{B}^{\lambda}}$ & $0.740$ & $0.747$ & $0.799$ & $0.855$ & $0.964$ & $1.000$ & $1.000$ & $1.000$ & $1.000$\\
$CD_{RS_{B}^{\lambda}}$ & $0.540$ & $0.564$ & $0.658$ & $0.820$ & $0.851$ & $0.873$ & $0.928$ & $1.000$ & $1.000$ \\
$CD_{RT_{B}^{\lambda}}$ & $0.878$ & $0.878$ & $0.881$ & $0.899$ & $0.901$ & $0.944$ & $0.987$ & $0.997$ & $1.000$\\
\hline
\end{tabular}
\end{table}
\begin{figure}[ht]
\centering
\includegraphics[scale=0.6]{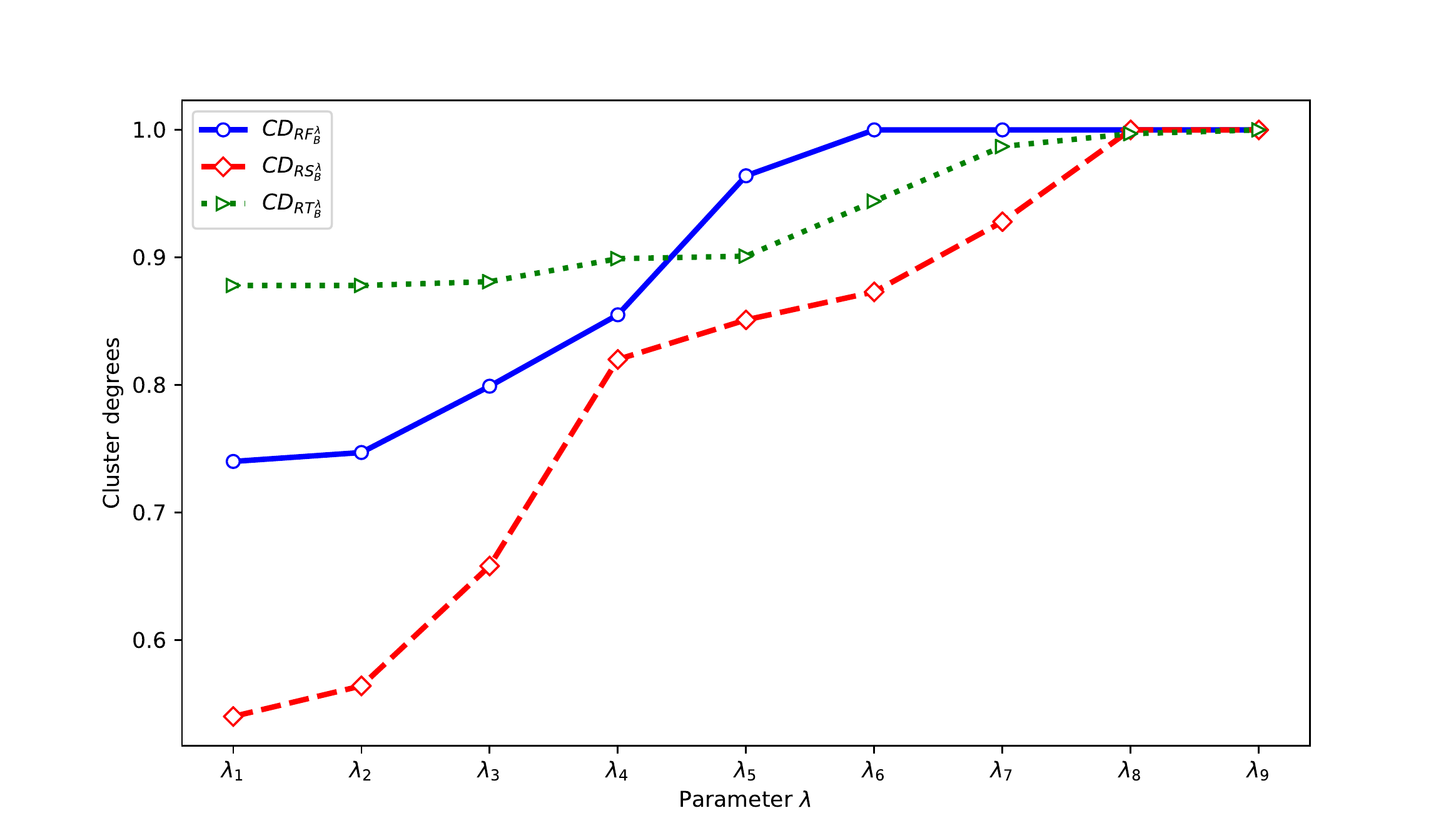}
\caption{The changes of cluster degrees for different values of parameter}
\label{fig3}
\end{figure}

\subsection{Comparison research on the relations from the viewpoint of the attribute reduction based on the cluster degree }
In this section, we analyze which similarity relation is the most rational for $RF_{B} ^{\lambda}$, $RS_{B} ^{\lambda}$ and $RT_{B} ^{\lambda}$ by means of attribute reduction based on the cluster degree.

By means of the cluster degree, we first give the definition of attribute reduction.
\begin{df}\label{cdr1}
Let $(U, A, V, f)$ be an interval-valued information system and $B \subseteq A$. $B$ is said a reduction of $A$ if $B$ satisfies the following conditions:

(1) $CD_{R_{B}} = CD_{R_{A}}$.

(2) $\forall a \in B$, $CD_{R_{B - \{a\}}} \neq CD_{R_{B }}$.
\end{df}

Similar to Section \ref{sec52}, in this section, we define significance measure of an attribute in terms of the cluster degree, and give the algorithm of attribute reduction.

\begin{df}\label{cdr2}
Let $(U, A, V, f)$ be an interval-valued information system and $B \subseteq A$. For $a \in A$, if $a\in B$, then the significance measure of the attribute $a$ with respect to $B$ is defined as follows:
\begin{equation}\label{sgm}
csig^{inner} _{_{R_{B}}}(a) = |CD_{R_{B}} - CD_{R_{B - \{a\}}}|.
\end{equation}
If $a\notin B$, then the significance measure of the attribute $a$ with respect to $B$ is defined as follows:
\begin{equation}\label{sgm1}
csig^{outer}_{_{R_{B}}}(a)=|CD_{R_{B \cup \{a\}}} - CD_{R_{B}}|.
\end{equation}
In particular, if $csig^{inner}_{_{R_{B}}}(a)= 0$, then $a$ is said reducible in $B$, and we can say that $a$ is a dispensable attribute.
\end{df}

According to Definitions \ref{cdr1} and \ref{cdr2}, we can provide a heuristic algorithm of reduction proposed by Definition \ref{cdr1} as follows:

\text{\bf CDR Algorithm:}

$step~1$: $B \leftarrow \emptyset$.

$step~2$: For each $a \in A-B$, calculate each $csig^{outer} _{_{R_{B}}}(a)$.

$step~3$: Choose $csig^{outer} _{_{R_{B}}}(a^{\prime}) = \max \{csig^{outer} _{_{R_{B}}}(a) \mid a \in A - B\}$ and $B \leftarrow B \cup \{a^{\prime}\}$.

$step~4$: If $CD_{R_{B}} = CD_{_{R_{A}}}$, then go to $step~5$. Otherwise, go to $step~2$;

$step~5$: For each $a \in B$, calculate $csig^{inner} _{_{R_{B}}}(a)$. If $csig^{inner}_{_{R_{B}}}(a) = 0$, then $B \leftarrow B - \{a\}$.

$step~6$: Output $B$. Now, $B$ is a reduction of $A$ by Definition \ref{cdr1}.

According to CDR Algorithm, we can give the following example so as to analyze the binary relations, $RF_{B} ^{\lambda}$, $RS_{B} ^{\lambda}$ and $RT_{B} ^{\lambda}$.

\begin{ex}\label{ex7}
In this example, we consider the Face Recognition Dataset in Table \ref{tb2}. For convenience, we will denote the Face recognition dataset as $(U, A)$, and write $a_{1} = AD$, $a_{2} = BC$, $a_{3} = AH$, $a_{4} = DH$, $a_{5} = EH$ and $a_{6} = GH$. Take $\lambda = 0.6$. By Eqs. (\ref{fr})-(\ref{tr}), we can obtain three binary relations: $RF_{A}^{0.6}$, $RS_{A}^{0.6}$ and $RT_{A}^{0.6}$. By means of CDR Algorithm, we can acquire three group reductions corresponding to $RF_{A}^{0.6}$, $RS_{A}^{0.6}$ and $RT_{A}^{0.6}$. The results are shown in Table \ref{tb11}.
\end{ex}

\begin{table}[htbp]
\centering
\caption{\label{tb11} Reductions based on the cluster degree}
\begin{tabular}{ccc}
\hline
Binary relations &Reductions\\
\hline
$RF_{B}^{0.6}$ & $\{\{a_{2}, a_{4}, a_{5}, a_{6}\}, \{a_{3}, a_{4}, a_{5}, a_{6}\}\}$\\
$RS_{B}^{0.6}$ & $\{\{a_{1}, a_{2}, a_{3}, a_{4}, a_{5}, a_{6}\}\}$\\
$RT_{B}^{0.6}$ & $\{\{a_{1}, a_{2}, a_{4}, a_{5} , a_{6}\}, \{a_{1}, a_{3}, a_{4}, a_{5} , a_{6}\}\}$\\
\hline
\end{tabular}
\end{table}

By Table \ref{tb11}, we have $\{a_{2}, a_{4}, a_{5}, a_{6}\}, \{a_{3}, a_{4}, a_{5}, a_{6}\}\subseteq\{a_{1}, a_{2}, a_{3}, a_{4}, a_{5}, a_{6}\}$, and then we conclude that the reductions based on $RF_{B}^{0.6}$ are subset of the reductions based on $RS_{B}^{0.6}$. In the same way, we can obtain that the reductions based on $RF_{B}^{0.6}$ are also subset of the reduction based on $RT_{B}^{0.6}$. This shows that we can choose the key attributes by using $RF_{B}^{0.6}$ to construct reduction algorithm.  Therefore $RF_{B}^{\lambda}$ is better choice than $RS_{B}^{0.6}$ and $RT_{B}^{\lambda}$ when we use rough set approach to investigate the interval-valued information systems. This coincides with the results analyzed by Section \ref{sec5}.

\section{Conclusions}
This paper presents two kinds of quantitative tools to measure binary relations, the transitive degree and the cluster degree. The transitive degree can represent the degree that a similarity relation is close to an equivalence relation. The cluster degree can be used to analyze the classification capability of binary relations induced by an interval-valued information system, and it can reflect the accuracy of the data from the interval-valued information system. By means of the transitive degree and the cluster degree, we construct some ways to distinguish binary relations. Examples illustrate that the two measures are very useful. In summary, we think that the measures are helpful to study information systems (not limit interval-valued information systems).\\

\noindent{\bf Declaration of interests}

The authors declare that they have no known competing financial interests or personal relationships that could have appeared to influence the work reported in this paper.








\end{document}